\newcommand{\vx}{\bm{x}}
\newcommand{\vv}{\bm{v}}
\newcommand{\vu}{\bm{u}}
\newcommand{\vw}{\bm{w}}
\newcommand{\vz}{\bm{z}}
\newcommand{\vy}{\bm{y}}
\newcommand{\vb}{\bm{b}}
\newcommand{\va}{\bm{a}}
\newcommand{\veps}{\bm{\varepsilon}}
\newcommand{\vtx}{\tilde{\bm{x}}}
\newcommand{\mW}{\bm{W}}
\newcommand{\mI}{\bm{I}}
\newcommand{\mO}{\bm{0}}
\newcommand{\rvx}{\mathbf{x}}
\newcommand{\rvu}{\mathbf{u}}
\newcommand{\rvv}{\mathbf{v}}
\newcommand{\rveps}{\mathbf{\xi}}
\newcommand{\ml}{\mathcal{L}_\mathrm{ML}}
\newcommand{\sm}{\mathcal{L}_\mathrm{SM}}
\newcommand{\estsm}{\hat{\mathcal{L}}_\mathrm{SM}}
\newcommand{\sml}{\mathcal{L}_\mathrm{SML}}
\newcommand{\ssm}{\mathcal{L}_\mathrm{SSM}}
\newcommand{\dsm}{\mathcal{L}_\mathrm{DSM}}
\newcommand{\fdssm}{\mathcal{L}_\mathrm{FDSSM}}
\newcommand{\param}{\theta}
\newcommand{\jacob}{\mathbf{J}}
\newcommand{\mM}{\mathbf{M}}
\newcommand{\exponential}{\mathrm{exp}}
\newcommand{\prior}{p_\rvu}
\newcommand{\Dkl}{\mathbb{D}_\mathrm{KL}}
\newcommand{\Df}{\mathbb{D}_\mathrm{F}}
\newcommand{\E}{\mathbb{E}}
\newcommand{\N}{\mathcal{N}}
\newcommand{\U}{\mathcal{U}}
\newcommand{\Sl}{\mathcal{S}_{l}}
\newcommand{\Sn}{\mathcal{S}_{n}}
\newcommand{\half}{\frac{1}{2}}
\newcommand{\dd}[2]{\frac{\partial}{\partial #1} #2}
\newcommand{\ddd}[2]{\frac{\partial^2}{\partial #1^2} #2}
\newcommand{\trace}[1]{\mathrm{Tr}\left( #1 \right)}
\newcommand{\abs}[1]{\left| #1 \right|}
\newcommand{\norm}[1]{\left\| #1 \right\|}
\newcommand{\of}[1]{\left( #1 \right)}
\newcommand{\ofmid}[1]{\left[ #1 \right]}
\newcommand{\Oof}[1]{\mathcal{O}( #1 )}
\newcommand{\oof}[1]{o\left( #1 \right)}
\newcommand{\sigmoid}[1]{\mathrm{sig}\left( #1 \right)}
\newcommand{\Rd}{\mathbb{R}^D}
\newcommand{\Rdd}{\mathbb{R}^{D\times D}}
\newcommand{\R}{\mathbb{R}}
\newcommand{\cmark}{\ding{51}}%
\newcommand{\xmark}{\ding{55}}%
\newtheorem{theorem}{Theorem}[section]
\newtheorem{proposition}[theorem]{Proposition}
\newtheorem{lemma}[theorem]{Lemma}
\theoremstyle{definition}
\newtheorem{assumption}[theorem]{Assumption}
\theoremstyle{remark}
\newtheorem{remark}[theorem]{Remark}
\title{Training Energy-Based Normalizing Flow with Score-Matching Objectives}
\author{%
  Chen-Hao Chao\textsuperscript{1}, Wei-Fang Sun\textsuperscript{1,2}, Yen-Chang Hsu\textsuperscript{3}, Zsolt Kira\textsuperscript{4}, and Chun-Yi Lee\textsuperscript{1}\thanks{Corresponding author. Email: \texttt{cylee@cs.nthu.edu.tw}}\\
  \textsuperscript{1} Elsa Lab, National Tsing Hua University, Hsinchu City, Taiwan\\
  \textsuperscript{2} NVIDIA AI Technology Center, NVIDIA Corporation, Santa Clara, CA, USA\\
  \textsuperscript{3} Samsung Research America, Mountain View, CA, USA\\
  \textsuperscript{4} Georgia Institute of Technology, Atlanta, GA, USA\\
}
\begin{document}

\maketitle
\begin{abstract}
In this paper, we establish a connection between the parameterization of flow-based and energy-based generative models, and present a new flow-based modeling approach called energy-based normalizing flow (EBFlow). We demonstrate that by optimizing EBFlow with score-matching objectives, the computation of Jacobian determinants for linear transformations can be entirely bypassed. This feature enables the use of arbitrary linear layers in the construction of flow-based models without increasing the computational time complexity of each training iteration from $\Oof{D^2L}$ to $\Oof{D^3L}$ for an $L$-layered model that accepts $D$-dimensional inputs. This makes the training of EBFlow more efficient than the commonly-adopted maximum likelihood training method. In addition to the reduction in runtime, we enhance the training stability and empirical performance of EBFlow through a number of techniques developed based on our analysis of the score-matching methods. The experimental results demonstrate that our approach achieves a significant speedup compared to maximum likelihood estimation while outperforming prior methods with a noticeable margin in terms of negative log-likelihood (NLL).
\end{abstract}
\section{Introduction}
\label{sec:introduction}
Parameter estimation for probability density functions (pdf) has been a major interest in the research fields of machine learning and statistics. Given a $D$-dimensional random data vector $\rvx\in \Rd$, the goal of such a task is to estimate the true pdf $p_\rvx(\cdot)$ of $\rvx$ with a function $p(\cdot\,;\param)$ parameterized by $\param$. In the studies of unsupervised learning, flow-based modeling methods (e.g., \citep{Dinh2014NICENI, Dinh2016DensityEU, Papamakarios2017MaskedAF, Kingma2018GlowGF}) are commonly-adopted for estimating $p_\rvx$ due to their expressiveness and broad applicability in generative tasks.

Flow-based models represent $p(\cdot\,;\param)$ using a sequence of invertible transformations based on the change of variable theorem, through which the intermediate unnormalized densities are re-normalized by multiplying the Jacobian determinant associated with each transformation. In maximum likelihood estimation, however, the explicit computation of the normalizing term may pose computational challenges for model architectures that use linear transformations, such as convolutions~\citep{Kingma2018GlowGF, Durkan2019NeuralSF} and fully-connected layers~\citep{Hyvrinen2000IndependentCA, Gresele2020RelativeGO}. To address this issue, several methods have been proposed in the recent literature, which includes constructing linear transformations with special structures~\citep{Song2019MintNetBI, Hoogeboom2019EmergingCF, Ma2019MaCowMC, Lu2020WoodburyTF, Meng2022ButterflyFlowBI} and exploiting special optimization processes~\citep{Gresele2020RelativeGO}. Despite their success in reducing the training complexity, these methods either require additional constraints on the linear transformations or biased estimation on the gradients of the objective.

Motivated by the limitations of the previous studies, this paper introduces an approach that reinterprets flow-based models as energy-based models~\citep{LeCun2006ATO}, and leverages score-matching methods~\citep{JMLR:v6:hyvarinen05a, vincent2011connection, song2019sliced, Pang2020EfficientLO} to optimize $p(\cdot\, ;\param)$ according to the Fisher divergence~\citep{JMLR:v6:hyvarinen05a, Lyu2009InterpretationAG} between $p_\rvx(\cdot)$ and $p(\cdot\, ;\param)$. The proposed method avoids the computation of the Jacobian determinants of linear layers during training, and reduces the asymptotic computational complexity of each training iteration from $\Oof{D^3L}$ to $\Oof{D^2L}$ for an $L$-layered model. Our experimental results demonstrate that this approach significantly improves the training efficiency as compared to maximum likelihood training. In addition, we investigate a theoretical property of Fisher divergence with respect to latent variables, and propose a Match-after-Preprocessing (MaP) technique to enhance the training stability of score-matching methods. Finally, our comparison on the MNIST dataset~\citep{deng2012mnist} reveals that the proposed method exhibit significant improvements in comparison to our baseline methods presented in~\citep{Pang2020EfficientLO} and~\citep{Gresele2020RelativeGO} in terms of negative log likelihood (NLL).
\section{Background}
\label{sec:background}
In this section, we discuss the parameterization of probability density functions in flow-based and energy-based modeling methods, and offer a number of commonly-used training methods for them.

\subsection{Flow-based Models}
\label{sec:background:flow}
Flow-based models describe $p_\rvx(\cdot)$ using a prior distribution $\prior(\cdot)$ of a latent variable $\rvu\in \Rd$ and an invertible function $g = g_L \circ \cdots \circ g_1$, where $g_i(\cdot\,;\param):\Rd\to \Rd,\,\forall i\in \{1, \cdots,L\}$ and is usually modeled as a neural network with $L$ layers. Based on the change of variable theorem and the distributive property of the determinant operation $\det \of{\cdot}$, $p(\cdot\,;\param)$ can be described as follows:
\begin{equation}
    \begin{aligned}
        p(\vx;\param) = \prior \of{g(\vx;\theta)} \abs{\det (\jacob_g(\vx;\param))}
        = \prior \of{g(\vx;\theta)}\prod_{i=1}^{L}\abs{\det (\jacob_{g_i}(\vx_{i-1};\param))},
    \end{aligned}
    \label{eq:flow}
\end{equation}where $\vx_{i}=g_i \circ \cdots \circ g_1(\vx;\param)$, $\vx_0=\vx$, $\jacob_g(\vx;\param) = \dd{\vx}{g(\vx;\param)}$ represents the Jacobian of $g$ with respect to $\vx$, and $\jacob_{g_i}(\vx_{i-1};\param)= \dd{\vx_{i-1}}{g_i(\vx_{i-1};\param)}$ represents the Jacobian of the $i$-th layer of $g$ with respect to $\vx_{i-1}$. This work concentrates on model architectures employing \textit{linear flows}~\citep{Papamakarios2019NormalizingFF} to design the function $g$. These model architectures primarily utilize linear transformations to extract crucial feature representations, while also accommodating non-linear transformations that enable efficient Jacobian determinant computation. Specifically, let $\Sl$ be the set of linear transformations in $g$, and $\Sn=\{g_i\, |\, i \in \{1, \cdots,L\}\} \setminus \Sl$ be the set of non-linear transformations. The general assumption of these model architectures is that $\prod_{i=1}^L\abs{\det \of{\jacob_{g_i}}}$ in Eq.~(\ref{eq:flow}) can be decomposed as $\prod_{g_i\in \Sn}\abs{\det \of{\jacob_{g_i}}}\prod_{g_i\in \Sl}\abs{\det \of{\jacob_{g_i}}}$, where $\prod_{g_i\in \Sn}\abs{\det \of{\jacob_{g_i}}}$ and $\prod_{g_i\in \Sl}\abs{\det \of{\jacob_{g_i}}}$ can be calculated within the complexity of $\Oof{D^2L}$ and $\Oof{D^3L}$, respectively. Previous implementations of such model architectures include Generative Flows (Glow)~\citep{Kingma2018GlowGF}, Neural Spline Flows (NSF)~\citep{Durkan2019NeuralSF}, and the independent component analysis (ICA) models presented in~\citep{Hyvrinen2000IndependentCA, Gresele2020RelativeGO}.

Given the parameterization of $p(\cdot\ ;\param)$, a commonly used approach for optimizing $\param$ is maximum likelihood (ML) estimation, which involves minimizing the Kullback-Leibler (KL) divergence $\Dkl \ofmid{p_\rvx(\vx)\|p(\vx;\param)}=\E_{p_\rvx(\vx)} \ofmid{\log \frac{p_\rvx(\vx)}{p(\vx;\param)}}$ between the true density $p_\rvx(\vx)$ and the parameterized density $p(\vx;\param)$. The ML objective $\ml(\param)$ is derived by removing the constant term $\E_{p_\rvx(\vx)} \ofmid{\log p_\rvx(\vx)}$ with respect to $\param$ from $\Dkl \ofmid{p_\rvx(\vx)\|p(\vx;\param)}$, and can be expressed as follows:
\begin{equation}
    \ml(\param)=\E_{p_\rvx(\vx)} \ofmid{-\log p(\vx;\param)}.
    \label{eq:ml}
\end{equation}The ML objective explicitly evaluates $p(\vx;\param)$, which involves the calculation of the Jacobian determinant of the layers in $\Sl$. This indicates that certain model architectures containing convolutional~\citep{Kingma2018GlowGF, Durkan2019NeuralSF} or fully-connected layers~\citep{Hyvrinen2000IndependentCA, Gresele2020RelativeGO} may encounter training inefficiency due to the $\Oof{D^3L}$ cost of evaluating $\prod_{g_i\in \Sl}\abs{\det \of{\jacob_{g_i}}}$. Although a number of alternative methods discussed in Section~\ref{sec:related_works} can be adopted to reduce their computational cost, they either require additional constraints on the linear transformation or biased estimation on the gradients of the ML objective.

\subsection{Energy-based Models}
\label{sec:background:energy}
Energy-based models are formulated based on a Boltzmann distribution, which is expressed as the ratio of an unnormalized density function to an input-independent normalizing constant. Specifically, given a scalar-valued energy function $E(\cdot\,;\param):\Rd\to \R$, the unnormalized density function is defined as $\exp \of{-E(\vx ;\param)}$, and the normalizing constant $Z(\param)$ is defined as the integration $\int_{\vx \in \Rd} \exp \of{-E(\vx;\param)} d\vx$. The parameterization of $p(\cdot\,;\param)$ is presented in the following equation:
\begin{equation}
    p(\vx;\param) = \exponential\of{-E(\vx;\param)} Z^{-1}(\param).
    \label{eq:energy}
\end{equation}

Optimizing $p(\cdot\,;\param)$ in Eq.~(\ref{eq:energy}) through directly evaluating $\ml$ in Eq.~(\ref{eq:ml}) is computationally infeasible, since the computation requires explicitly calculating the intractable normalizing constant $Z(\param)$. To address this issue, a widely-used technique~\citep{LeCun2006ATO} is to reformulate $\dd{\param}{\ml(\param)}$ as its sampling-based variant $\dd{\param}{\sml(\param)}$, which is expressed as follows:
\begin{equation}
    \sml(\param)=\E_{p_\rvx(\vx)} \ofmid{E(\vx;\param)}-\E_{\mathrm{sg}(p(\vx;\theta))} \ofmid{E(\vx;\param)},
    \label{eq:sml}
\end{equation}
where $\mathrm{sg}(\cdot)$ indicates the stop-gradient operator. Despite the fact that Eq.~(\ref{eq:sml}) prevents the calculation of $Z(\param)$, sampling from $p(\cdot\,;\param)$ typically requires running a Markov Chain Monte Carlo (MCMC) process (e.g., ~\citep{bj/1178291835, roberts1998optimal}) until convergence, which can still be computationally expensive as it involves evaluating the gradients of the energy function numerous times. Although several approaches~\citep{Hinton2002TrainingPO, Tieleman2008TrainingRB} were proposed to mitigate the high computational costs involved in performing an MCMC process, these approaches make use of approximations, which often cause training instabilities in high-dimensional contexts~\citep{du2021improved}.

Another line of researches proposed to optimize $p(\cdot\,;\param)$ through minimizing the Fisher divergence $\Df \ofmid{p_\rvx(\vx)\|p(\vx;\param)}= \E_{p_\rvx(\vx)} \ofmid{\half \|\dd{\vx}{\log \of{\frac{p_\rvx(\vx)}{p(\vx;\param)}}} \|^2 }$ between $p_\rvx(\vx)$ and $p(\vx;\param)$ using the score-matching (SM) objective $\sm(\param)= \E_{p_\rvx(\vx)} \ofmid{\half\|\dd{\vx}{E(\vx;\param)}\|^2-\mathrm{Tr}(\ddd{\vx}{E(\vx;\param)})}$~\citep{JMLR:v6:hyvarinen05a} to avoid the explicit calculation of $Z(\param)$ as well as the sampling process required in Eq.~(\ref{eq:sml}). Several computationally efficient variants of $\sm$, including sliced score matching (SSM)~\citep{song2019sliced}, finite difference sliced score matching (FDSSM)~\citep{Pang2020EfficientLO}, and denoising score matching (DSM)~\citep{vincent2011connection}, have been proposed.

SSM is derived directly based on $\sm$ with an unbiased Hutchinson's trace estimator~\citep{hutchinson1989stochastic}. Given a random projection vector $\rvv\in \Rd$ drawn from $p_\rvv$ and satisfying $\E_{p_\rvv(\vv)}[\vv^T \vv]=\mI$, the objective function denoted as $\ssm$, is defined as follows:
\begin{equation}
\begin{aligned}
    \ssm(\param)=\half \E_{p_\rvx(\vx)} \ofmid{ \norm{ \frac{\partial E(\vx;\theta)}{\partial \vx} }^2}
    - \E_{p_\rvx(\vx)p_\rvv(\vv)} \ofmid{\vv^T\frac{\partial^2 E(\vx;\theta)}{\partial \vx^2}\vv}.
\end{aligned}
    \label{eq:ssm}
\end{equation}

FDSSM is a parallelizable variant of $\ssm$ that adopts the finite difference method~\citep{Neumaier2001IntroductionTN} to approximate the gradient operations in the objective. Given a uniformly distributed random vector $\veps$, it accelerates the calculation by simultaneously forward passing $E(\vx;\param)$, $E(\vx+\veps;\param)$, and $E(\vx-\veps;\param)$ as follows:
\begin{equation}
\begin{aligned}
\fdssm(\param) &= 2 \E_{p_\rvx(\vx)} \ofmid{E(\vx;\theta)} -\E_{p_\rvx(\vx)p_\rveps(\veps)}  \ofmid{E(\vx+\veps;\theta)+E(\vx-\veps;\theta)} \\
&+ \frac{1}{8} \E_{p_\rvx(\vx)p_\rveps(\veps)} \ofmid{ \of{E(\vx+\veps;\theta)-E(\vx-\veps;\theta)}^2},
\end{aligned}
\label{eq:fdssm}
\end{equation}where $p_\rveps(\veps)=\U(\veps\in \Rd|\norm{\veps}=\rveps$), and $\rveps$ is a hyper-parameter that usually assumes a small value.

DSM approximates the true pdf through a surrogate that is constructed using the Parzen density estimator $p_\sigma(\vtx)$~\citep{Parzen1962OnEO}. The approximated target $p_\sigma(\vtx)=\int_{\vx\in\Rd} p_\sigma(\vtx|\vx)p_\rvx(\vx) d\vx$ is defined based on an isotropic Gaussian kernel $p_\sigma(\vtx|\vx)=\N(\vtx|\vx,\sigma^2\mI)$ with a variance $\sigma^2$. The objective $\dsm$, which excludes the Hessian term in $\ssm$, is written as follows:
\begin{equation}
    \dsm(\param)=\E_{p_\rvx(\vx)p_\sigma(\vtx|\vx)} \ofmid{ \half \norm{\frac{\partial E(\vtx;\theta)}{\partial \vtx} + \frac{\vx-\vtx}{\sigma^2} }^2 }.
    \label{eq:dsm}
\end{equation}

To conclude, $\ssm$ is an unbiased objective that satisfies $\dd{\param}{\ssm(\param)}=\dd{\param}{\Df \ofmid{p_\rvx(\vx)\|p(\vx;\param)}}$~\citep{song2019sliced}, while $\fdssm$ and $\dsm$ require careful selection of hyper-parameters $\rveps$ and $\sigma$, since $\dd{\param}{\fdssm}(\param)=\dd{\param}{(\Df \ofmid{p_\rvx(\vx)\|p(\vx;\param)}+\oof{\rveps}})$~\citep{Pang2020EfficientLO} contains an approximation error $\oof{\rveps}$, and $p_\sigma$ in $\dd{\param}{\dsm}(\param)=\dd{\param}{\Df \ofmid{p_\sigma(\vtx)\|p(\vtx;\param)}}$ may bear resemblance to $p_\rvx$ only for small $\sigma$~\citep{vincent2011connection}.
\section{Related Works}
\label{sec:related_works}

\subsection{Accelerating Maximum Likelihood Training of Flow-based Models}
A key focus in the field of flow-based modeling is to reduce the computational expense associated with evaluating the ML objective~\citep{Gresele2020RelativeGO, Song2019MintNetBI, Hoogeboom2019EmergingCF, Ma2019MaCowMC, Lu2020WoodburyTF, Meng2022ButterflyFlowBI, Tomczak2016ImprovingVA}. These acceleration methods can be classified into two categories based on their underlying mechanisms.

\textbf{Specially Designed Linear Transformations.}~A majority of the existing works~\citep{Song2019MintNetBI, Hoogeboom2019EmergingCF, Ma2019MaCowMC, Lu2020WoodburyTF, Meng2022ButterflyFlowBI, Tomczak2016ImprovingVA} have attempted to accelerate the computation of Jacobian determinants in the ML objective by exploiting linear transformations with special structures. For example, the authors in \citep{Song2019MintNetBI} proposed to constrain the weights in linear layers as lower triangular matrices to speed up training. The authors in \citep{Hoogeboom2019EmergingCF, Ma2019MaCowMC} proposed to adopt convolutional layers with masked kernels to accelerate the computation of Jacobian determinants. The authors in~\citep{Tomczak2016ImprovingVA} leveraged orthogonal transformations to bypass the direct computation of Jacobian determinants. More recently, the authors in~\citep{Meng2022ButterflyFlowBI} proposed to utilize linear operations with special \textit{butterfly} structures~\citep{Dao2019LearningFA} to reduce the cost of calculating the determinants. Although these techniques avoid the $\Oof{D^3L}$ computation, they impose restrictions on the learnable transformations, which potentially limits their capacity to capture complex feature representations, as discussed in~\citep{Gresele2020RelativeGO, Behrmann2018InvertibleRN, Chen2019ResidualFF}. Our experimental findings presented in Appendix~\ref{apx:impact} support this concept, demonstrating that flow-based models with unconstrained linear layers outperform those with linear layers restricted by lower / upper triangular weight matrices~\citep{Song2019MintNetBI} or those using lower–upper (LU) decomposition~\citep{Kingma2018GlowGF}.

\textbf{Specially Designed Optimization Process.}~To address the aforementioned restrictions, a recent study~\citep{Gresele2020RelativeGO} proposed the relative gradient method for optimizing flow-based models with arbitrary linear transformations. In this method, the gradients of the ML objective are converted into their relative gradients by multiplying themselves with $\mW^T\mW$, where $\mW \in \Rdd$ represents the weight matrix in a linear transformation. Since $\dd{\mW}{\log \abs{\det \of{\mW}}}\mW^T\mW=\mW$, evaluating relative gradients is more computationally efficient than calculating the standard gradients according to $\dd{\mW}{\log \abs{\det \of{\mW}}}=(\mW^T)^{-1}$. While this method reduces the training time complexity from $\Oof{D^3L}$ to $\Oof{D^2L}$, a significant downside to this approach is that it introduces approximation errors with a magnitude of $o(\mW)$, which can escalate relative to the weight matrix values.

\subsection{Training Flow-based Models with Score-Matching Objectives}
The pioneering study~\citep{JMLR:v6:hyvarinen05a} is the earliest attempt to train flow-based models by minimizing the SM objective. Their results demonstrate that models trained using the SM loss are able to achieve comparable or even better performance to those trained with the ML objective in a low-dimensional experimental setup. More recently, the authors in \citep{song2019sliced} and \citep{Pang2020EfficientLO} proposed two efficient variants of the SM loss, i.e., the SSM and FDSSM objectives, respectively. They demonstrated that these loss functions can be used to train a non-linear independent component estimation (NICE)~\citep{Dinh2014NICENI} model on high-dimensional tasks. While the training approaches of these works bear resemblance to ours, our proposed method places greater emphasis on training efficiency. Specifically, they directly implemented the energy function $E(\vx;\param)$ in the score-matching objectives as $-\log p(\vx;\param)$, resulting in a significantly higher computational cost compared to our method introduced in Section~\ref{sec:methodology}. In Section~\ref{sec:experiments}, we further demonstrate that the models trained with the methods in \citep{song2019sliced, Pang2020EfficientLO} yield less satisfactory results in comparison to our approach.
\section{Methodology}
\label{sec:methodology}

In this section, we introduce a new framework for reducing the training cost of flow-based models with linear transformations, and discuss a number of training techniques for enhancing its performance.

\subsection{Energy-Based Normalizing Flow}
\label{sec:methodology:parameterization}
Instead of applying architectural constraints to reduce computational time complexity, we achieve the same goal through adopting the training objectives of energy-based models. We name this approach as Energy-Based Normalizing Flow (EBFlow). A key observation is that the parametric density function of a flow-based model can be reinterpreted as that of an energy-based model through identifying the input-independent multipliers in $p(\cdot\,;\param)$. Specifically, $p(\cdot\,;\param)$ can be explicitly factorized into an unnormalized density and a corresponding normalizing term as follows:
\begin{equation}
    \begin{aligned}
         p(\vx;\param)&= \prior \of{g(\vx;\theta)}\prod_{i=1}^{L}\abs{\det \of{\jacob_{g_i}(\vx_{i-1};\param)}} \\
         &= \underbrace{\prior \of{g(\vx;\theta)}\prod_{g_i\in \Sn}\abs{\det \of{\jacob_{g_i}(\vx_{i-1};\param)}}}_{\text{Unnormalized Density}} \underbrace{\prod_{g_i\in \Sl}\abs{\det (\jacob_{g_i}(\param))}}_{\text{Norm. Const.}}
         \triangleq \underbrace{\exponential\of{-E(\vx;\param)}}_{\text{Unnormalized Density}} \underbrace{Z^{-1}(\param)}_{\ \text{Norm. Const.}}\\
    \end{aligned}
    \label{eq:ebflow}
\end{equation}
where the energy function $E(\cdot\,;\param)$ and the normalizing constant $Z^{-1}(\param)$ are selected as follows:
\begin{equation}
\begin{aligned}
  E(\vx;\param) \triangleq
  -\log \bigg(\prior \of{g(\vx;\theta)}\prod_{g_i\in \Sn}\abs{\det (\jacob_{g_i}(\vx_{i-1};\param))}\bigg), Z^{-1}(\param)=
  \prod_{g_i\in \Sl}\abs{\det (\jacob_{g_i}(\param))}.
\end{aligned}
\label{eq:ebflow_def}
\end{equation}
The detailed derivations of Eqs.~(\ref{eq:ebflow}) and (\ref{eq:ebflow_def}) are elaborated in Lemma~\ref{lemma:intuition} of Section~\ref{apx:derivations:interpretation}. By isolating the computationally expensive term in $p(\cdot\,;\param)$ as the normalizing constant $Z(\param)$, the parametric pdf defined in Eqs.~(\ref{eq:ebflow}) and (\ref{eq:ebflow_def}) becomes suitable for the training methods of energy-based models. In the subsequent paragraphs, we discuss the training, inference, and convergence property of EBFlow. 

\textbf{Training Cost.}~Based on the definition in Eqs.~(\ref{eq:ebflow}) and (\ref{eq:ebflow_def}), the score-matching objectives specified in Eqs.~(\ref{eq:ssm})-(\ref{eq:dsm}) can be adopted to prevent the Jacobian determinant calculation for the elements in $\Sl$. As a result, the training complexity can be significantly reduced to $\Oof{D^2L}$, as the $\Oof{D^3L}$ calculation of $Z(\param)$ is completely avoided. Such a design allows the use of arbitrary linear transformations in the construction of a flow-based model without posing computational challenge during the training process. This feature is crucial to the architectural flexibility of a flow-based model. For example, fully-connected layers and convolutional layers with arbitrary padding and striding strategies can be employed in EBFlow without increasing the training complexity. EBFlow thus exhibits an enhanced flexibility in comparison to the related works that exploit specially designed linear transformations.

\textbf{Inference Cost.}~Although the computational cost of evaluating the exact Jacobian determinants of the elements in $\Sl$ still requires $\Oof{D^3L}$ time, these operations can be computed only once after training and reused for subsequent inferences, since $Z(\param)$ is a constant as long as $\param$ is fixed. In cases where $D$ is extremely large and $Z(\param)$ cannot be explicitly calculated, stochastic estimators such as the importance sampling techniques (e.g., \citep{Neal1998AnnealedIS, Burda2014AccurateAC}) can be used as an alternative to approximate $Z(\param)$. We provide a brief discussion of such a scenario in Appendix~\ref{apx:additional_exp:is}.

\textbf{Asymptotic Convergence Property.}~Similar to maximum likelihood training, score-matching methods that minimize Fisher divergence have theoretical guarantees on their \textit{consistency}~\citep{JMLR:v6:hyvarinen05a, song2019sliced}. This property is essential in ensuring the convergence accuracy of the parameters. Let $N$ be the number of independent and identically distributed (i.i.d.) samples drawn from $p_\rvx$ to approximate the expectation in the SM objective. In addition, assume that there exists a set of optimal parameters $\param^*$ such that $p(\vx;\param^*)=p_\rvx(\vx)$. Under the regularity conditions (i.e., Assumptions~\ref{asp:positiveness}-\ref{asp:Lipschitz} shown in Appendix~\ref{apx:derivation:properties}), \textit{consistency} guarantees that the parameters $\param_N$ minimizing the SM loss converges (in probability) to its optimal value $\param^*$ when $N\to \infty$, i.e., $\param_N \xrightarrow[]{p} \param^*$ as $N\to \infty$. In Appendix~\ref{apx:derivation:properties}, we provide a formal description of this property based on~\citep{song2019sliced} and derive the sufficient condition for $g$ and $p_\rvu$ to satisfy the regularity conditions (i.e., Proposition~\ref{prop:lip_f}).

\subsection{Techniques for Enhancing the Training of EBFlow}
\label{sec:methodology:latent}
As revealed in the recent studies~\citep{song2019sliced, Pang2020EfficientLO}, training flow-based models with score-matching objectives is challenging as the training process is numerically unstable and usually exhibits significant variances. To address these issues, we propose to adopt two techniques: match after preprocessing (MaP) and exponential moving average (EMA), which are particularly effective in dealing with the above issues according to our ablation analysis in Section~\ref{sec:experiments:ablation}.

\textbf{MaP.}~Score-matching methods rely on the score function $-\dd{\vx}{E(\vx;\param)}$ to match $\dd{\vx}{\log p_\rvx(\vx)}$, which requires backward propagation through each layer in $g$. This indicates that the training process could be numerically sensitive to the derivatives of $g$. For instance, logit pre-processing layers commonly used in flow-based models (e.g.,~\citep{Dinh2014NICENI, Kingma2018GlowGF, Durkan2019NeuralSF, Gresele2020RelativeGO, Song2019MintNetBI, Grathwohl2018FFJORDFC}) exhibit extremely large derivatives near 0 and 1, which might exacerbate the above issue. To address this problem, we propose to exclude the numerically sensitive layer(s) from the model and match the pdf of the pre-processed variable during training. Specifically, let $\rvx_k\triangleq  g_k\circ \cdots \circ g_1(\rvx)$ be the pre-processed variable, where $k$ represents the index of the numerically sensitive layer. This method aims to optimize a parameterized pdf $p_k(\cdot\, ;\param)\triangleq \prior(g_L\circ \cdots \circ g_{k+1}(\cdot\,;\param))\prod_{i=k+1}^L \abs{\det \of{\jacob_{g_i}}}$ that excludes $(g_k, \cdots, g_1)$ through minimizing the Fisher divergence between the pdf $p_{\rvx_k}(\cdot)$ of $\rvx_k$ and  $p_k(\cdot\,;\param)$ by considering the (local) behavior of $\Df$, as presented in Proposition~\ref{prop:latent}.
\begin{proposition}
\label{prop:latent}
Let $p_{\rvx_j}$ be the pdf of the latent variable of $\rvx_j\triangleq  g_j\circ \cdots \circ g_1(\rvx)$ indexed by $j$. In addition, let $p_j(\cdot)$ be a pdf modeled as $\prior(g_L\circ \cdots \circ g_{j+1}(\cdot))\prod_{i=j+1}^L \abs{\det \of{\jacob_{g_i}}}$, where $j\in \{0,\cdots,L-1\}$. It follows that:
\begin{equation}
    \begin{aligned}
       \Df \ofmid{p_{\rvx_j}\|p_j}=0 &\Leftrightarrow \Df \ofmid{p_{\rvx}\|p_0}=0, \forall j\in \{1,\cdots,L-1\}.
    \end{aligned}
\end{equation}
\end{proposition}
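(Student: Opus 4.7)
The plan is to reduce each side of the claimed biconditional to an almost-everywhere density equality and then propagate that equality through the diffeomorphism $\phi_j \triangleq g_j \circ \cdots \circ g_1$. First I would establish the standard fact that $\Df\ofmid{p \| q} = 0$ implies $p = q$ almost everywhere: since the integrand $\tfrac{1}{2}\norm{\dd{\vx}{\log(p(\vx)/q(\vx))}}^2$ is nonnegative and has zero $p$-expectation, positivity of $p$ (part of the regularity conditions in Appendix~\ref{apx:derivation:properties}) forces $\dd{\vx}{\log(p/q)} = 0$ a.e.; connectedness of $\Rd$ then makes $p/q$ a constant, and $\int p = \int q = 1$ pins that constant to one. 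The converse is immediate, so $\Df\ofmid{p \| q} = 0$ is equivalent to $p = q$ a.e.

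Next I would link the two pairs of densities by change of variables. Because $\phi_j$ is an invertible composition of $g_1,\ldots,g_j$, the pushforward formula gives
\begin{equation*}
    p_\rvx(\vx) \;=\; p_{\rvx_j}\of{\phi_j(\vx)} \,\prod_{i=1}^{j}\abs{\det\of{\jacob_{g_i}}}.
\end{equation*}
Separately, expanding $p_0$ from Eq.~(\ref{eq:ebflow}) and using $g_L\circ\cdots\circ g_{j+1}\circ\phi_j = g$ together with the fact that, for $i>j$, the Jacobian $\jacob_{g_i}$ is evaluated at the same intermediate point $g_{i-1}\circ\cdots\circ g_1(\vx)$ in both $p_0(\vx)$ and $p_j(\phi_j(\vx))$, one obtains
\begin{equation*}
    p_0(\vx) \;=\; p_j\of{\phi_j(\vx)}\,\prod_{i=1}^{j}\abs{\det\of{\jacob_{g_i}}}.
\end{equation*}
Dividing the two identities cancels the Jacobian factors, leaving $p_\rvx(\vx)/p_0(\vx) = p_{\rvx_j}(\phi_j(\vx))/p_j(\phi_j(\vx))$ for every $\vx$. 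Since $\phi_j$ is a bijection of $\Rd$, the equality $p_\rvx = p_0$ a.e.\ is equivalent to $p_{\rvx_j} = p_j$ a.e., which combined with the first step yields the biconditional for every $j\in\{1,\ldots,L-1\}$.

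The main obstacle I anticipate is the measure-theoretic cleanliness of the first step, namely deducing $p = q$ a.e.\ from the vanishing of $\Df$. That deduction relies on positivity of the densities, connectedness of the domain, and sufficient regularity to integrate $\dd{\vx}{\log(p/q)}=0$ to a global constant; rather than re-deriving these I would invoke the assumptions already collected in Appendix~\ref{apx:derivation:properties}. The change-of-variables step is then a direct unrolling of the definitions of $p_0$ and $p_j$ and constitutes the substantive content of the proposition.
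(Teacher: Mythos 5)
Your proof is correct, but it follows a genuinely different route from the paper's. The paper first establishes Lemma~\ref{lemma:fisher_derive}, a change-of-variables identity for the Fisher divergence itself: $\Df\ofmid{p_{\rvx}\|p_0}=\E_{p_{\rvx_j}(\vx_j)}\bigl[\half\|(\dd{\vx_j}{\log(p_{\rvx_j}(\vx_j)/p_j(\vx_j))})\prod_{i=1}^j\jacob_{g_i}\|^2\bigr]$, so that both divergences in the proposition become expectations under the same measure $p_{\rvx_j}$ of squared norms of the same score-difference vector, one of them right-multiplied by the matrix $\prod_{i=1}^j\jacob_{g_i}$. The equivalence of the two vanishing conditions then follows pointwise from positivity of $p_{\rvx_j}$ (Assumption~\ref{asp:positiveness}) together with the observation that all singular values of $\prod_{i=1}^j\jacob_{g_i}$ are nonzero; the paper never integrates the score equation back to a statement about the densities. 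You instead reduce each side to an almost-everywhere density equality ($\Df\ofmid{p\|q}=0\Leftrightarrow p=q$ a.e.) and transport that equality through the bijection $\phi_j$ via the ratio identity $p_\rvx/p_0=(p_{\rvx_j}\circ\phi_j)/(p_j\circ\phi_j)$, which you derive correctly. This costs you an extra global step: deducing constancy of $\log(p/q)$ from an a.e.-vanishing gradient requires continuity of the score difference and connectedness of $\Rd$, and pinning the constant to zero requires both densities to integrate to one (true here for $p_0$ and $p_j$ as pushforwards of $\prior$, but worth stating), whereas the paper's argument is purely local. In exchange you obtain the strictly stronger intermediate fact that either divergence vanishes if and only if the corresponding pair of densities coincide, which the paper's proof does not establish. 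Both arguments ultimately rest on the same two ingredients --- positivity of the densities and invertibility of the Jacobians of $g_1,\ldots,g_j$ --- so your proposal is a valid alternative proof rather than a gap-filling of the paper's.
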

The derivation is presented in Appendix~\ref{apx:derivations:latent}. In Section~\ref{sec:experiments:ablation}, we validate the effectiveness of the MaP technique on the score-matching methods formulated in Eqs.~(\ref{eq:ssm})-(\ref{eq:dsm}) through an ablation analysis. Please note that MaP does not affect maximum likelihood training, since it always satisfies $\Dkl \ofmid{p_{\rvx_j}\|p_j}=\Dkl \ofmid{p_{\rvx}\|p_0}$, $\forall j\in \{1,\cdots,L-1\}$ as revealed in Lemma~\ref{lemma:kl}.

\textbf{EMA.}~In addition to the MaP technique, we have also found that the exponential moving average (EMA) technique introduced in~\citep{Song2020ImprovedTF} is effective in improving the training stability. EMA enhances the stability through smoothly updating the parameters based on $\tilde{\param}\leftarrow m \tilde{\param}+(1-m)\param_i$ at each training iteration, where $\tilde{\param}$ is a set of shadow parameters~\citep{Song2020ImprovedTF}, $\param_i$ is the model's parameters at iteration $i$, and $m$ is the momentum parameter. In our experiments presented in Section~\ref{sec:experiments}, we adopt $m=0.999$ for both EBFlow and the baselines.

\section{Experiments}
\label{sec:experiments}
In the following experiments, we first compare the training efficiency of the baselines trained with $\ml$ and EBFlow trained with $\sml$, $\ssm$, $\fdssm$, and $\dsm$ to validate the effectiveness of the proposed method in Sections~\ref{sec:experiments:2d} and \ref{sec:experiments:high_dim}. Then, in Section~\ref{sec:experiments:ablation}, we provide an ablation analysis of the techniques introduced in Section~\ref{sec:methodology:latent}, and a performance comparison between EBFlow and a number of related studies~\citep{Gresele2020RelativeGO, song2019sliced, Pang2020EfficientLO}. Finally, in Section~\ref{sec:experiments:application}, we discuss how EBFlow can be applied to generation tasks. Please note that the performance comparison with~\citep{Song2019MintNetBI, Hoogeboom2019EmergingCF, Ma2019MaCowMC, Lu2020WoodburyTF, Meng2022ButterflyFlowBI, Tomczak2016ImprovingVA} is omitted, since their methods only support specialized linear layers and are not applicable to the employed model architecture~\citep{Gresele2020RelativeGO} that involves fully-connected layers. The differences between EBFlow, the baseline, and the related studies are summarized in Table~\ref{tab:overall_comparison} in the appendix. The sampling process involved in the calculation of $\sml$ is implemented by $g^{-1}(\rvu\,;\param)$, where $\rvu\sim \prior$. The transformation $g(\cdot\,;\param)$ for each task is designed such that $\Sl \neq \phi$ and $\Sn \neq \phi$. For more details about the experimental setups, please refer to Appendix~\ref{apx:configuration}.

\subsection{Density Estimation on Two-Dimensional Synthetic Examples}
\label{sec:experiments:2d}
\begin{wrapfigure}[13]{r}{0.45\linewidth}
    \centering
    \vspace{-1.6em}
    \footnotesize
    \includegraphics[width=\linewidth]{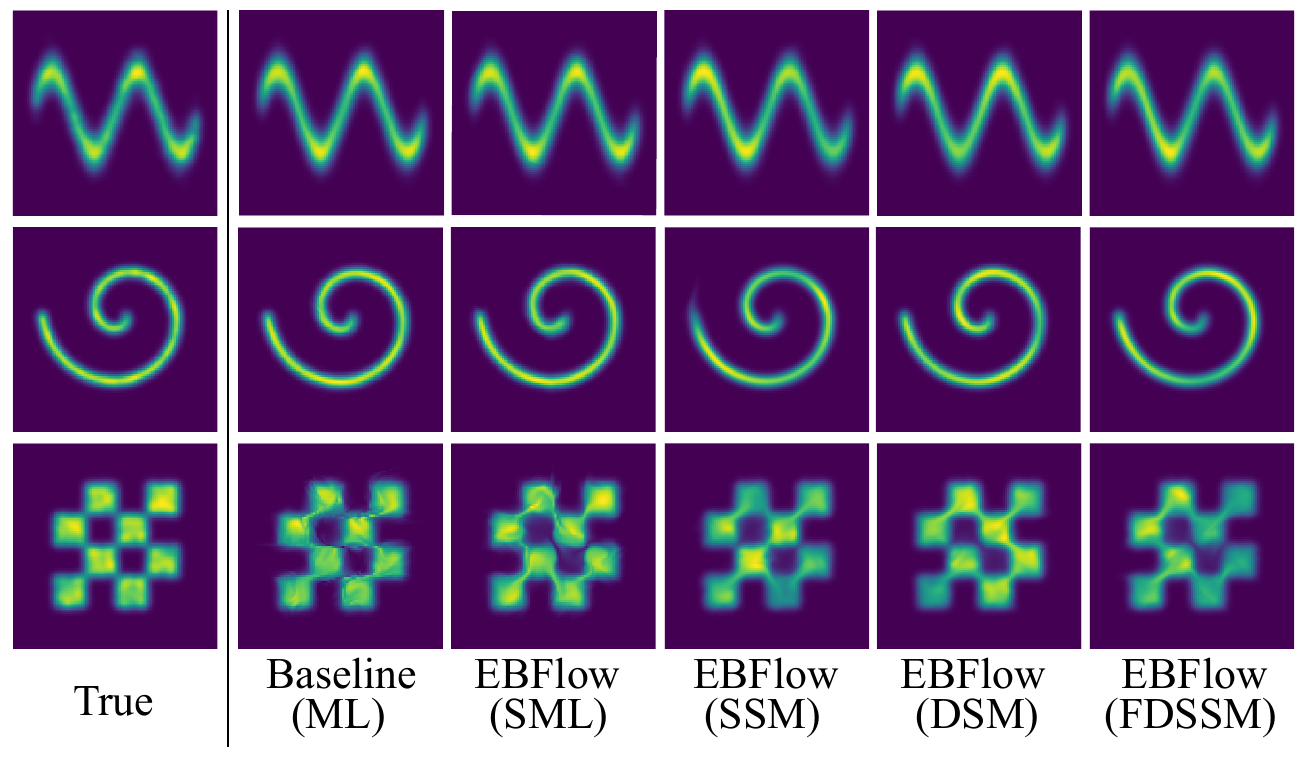}
    \vspace{-2em}
    \caption{The visualized density functions on the Sine, Swirl, and Checkerboard datasets. The column `True' illustrates the visualization of the true density functions.}
    \vspace{-1.7em}
    \label{fig:toy}
\end{wrapfigure}

In this experiment, we examine the performance of EBFlow and its baseline on three two-dimensional synthetic datasets. These data distributions are formed using Gaussian smoothing kernels to ensure $p_\rvx(\vx)$ is continuous and the true score function $\dd{\vx}{\log p_\rvx(\vx)}$ is well defined. The model $g(\cdot\,;\param)$ is constructed using the Glow model architecture~\citep{Kingma2018GlowGF}, which consists of actnorm layers, affine coupling layers, and fully-connected layers. The performance are evaluated in terms of the KL divergence and the Fisher divergence between $p_\rvx(\vx)$ and $p(\vx;\param)$ using independent and identically distributed (i.i.d.) testing sample points.

Table~\ref{tab:toy} and Fig.~\ref{fig:toy} demonstrate the results of the above setting. The results show that the performance of EBFlow trained with $\ssm$, $\fdssm$, and $\dsm$ in terms of KL divergence is on par with those trained using $\sml$ as well as the baselines trained using $\ml$. These results validate the efficacy of training EBFlow with score matching. 

\begin{table*}[t]
\renewcommand{\arraystretch}{1}
\newcommand{\boldtoprule}{\midrule[0.9pt]}
\newcommand{\thline}{\midrule[0.3pt]}
\centering
\caption{The evaluation results in terms of KL-divergence and Fisher-divergence of the flow-based models trained with $\ml$, $\sml$, $\ssm$, $\dsm$, and $\fdssm$ on the Sine, Swirl, and Checkerboard datasets. The results are reported as the mean and 95\% confidence interval of three independent runs.}
\vspace{-1em}
\label{tab:toy}
\begin{center}
    \tiny
    \resizebox{\linewidth}{!}{
    \begin{tabular}{c|c|ccccc}
        \boldtoprule
        Dataset & Metric & Baseline (ML) & EBFlow (SML) & EBFlow (SSM) & EBFlow (DSM) & EBFlow (FDSSM) \\
        \thline
        \multirow{2}{*}{Sine} & Fisher Divergence ($\downarrow$) & 6.86 $\pm$ 0.73 e-1 & 6.65 $\pm$ 1.05 e-1 & \textbf{6.25 $\pm$ 0.84 e-1} & 6.66 $\pm$ 0.44 e-1 & 6.66 $\pm$ 1.33 e-1 \\
        & KL Divergence ($\downarrow$) & \textbf{4.56 $\pm$ 0.00 e+0} & \textbf{4.56 $\pm$ 0.00 e+0} & \textbf{4.56 $\pm$ 0.01 e+0} & 4.57 $\pm$ 0.02 e+0 & 4.57 $\pm$ 0.01 e+0 \\
        \thline
        \multirow{2}{*}{Swirl} & Fisher Divergence ($\downarrow$) & 1.42 $\pm$ 0.48 e+0 & 1.42 $\pm$ 0.53 e+0 & 1.35 $\pm$ 0.10 e+0 & \textbf{1.34 $\pm$ 0.06 e+0} & 1.37 $\pm$ 0.07 e+0 \\
        & KL Divergence ($\downarrow$) & \textbf{4.21 $\pm$ 0.00 e+0} & \textbf{4.21 $\pm$ 0.01 e+0} & 4.25 $\pm$ 0.04 e+0 & 4.22 $\pm$ 0.02 e+0  & 4.25 $\pm$ 0.08 e+0 \\
        \thline
        \multirow{2}{*}{Checkerboard} & Fisher Divergence ($\downarrow$) & 7.24 $\pm$ 11.50 e+1 & 1.23 $\pm$ 0.75 e+0 & 7.07 $\pm$ 1.93 e-1 & \textbf{7.03 $\pm$ 1.99 e-1} & 7.08 $\pm$ 1.62 e-1 \\
        & KL Divergence ($\downarrow$) & \textbf{4.80 $\pm$ 0.02 e+0} & 4.81 $\pm$ 0.02 e+0 & 4.85 $\pm$ 0.05 e+0 & 4.82 $\pm$ 0.05 e+0  & 4.83 $\pm$ 0.03 e+0 \\
        \boldtoprule
    \end{tabular}}
\end{center}
\end{table*}

\begin{table*}[t]
\renewcommand{\arraystretch}{1.4}
\newcommand{\boldtoprule}{\midrule[2.8pt]}
\newcommand{\thline}{\midrule[0.3pt]}
\centering
\caption{The evaluation results in terms of the performance (i.e., NLL and Bits/Dim) and the throughput (i.e., Batch/Sec.) of the FC-based and CNN-based models trained with the baseline and the proposed method on MNIST and CIFAR-10. Each result is reported in terms of the mean and 95\% confidence interval of three independent runs after $\param$ is converged. The throughput is measured on NVIDIA Tesla V100 GPUs.}
\label{tab:datasets}
\vspace{-1em}
\begin{center}
\huge
    \resizebox{\linewidth}{!}{
    \begin{tabular}{c|ccccc|ccccc}
        \boldtoprule
            \multicolumn{11}{c}{MNIST ($D=784$)} \\
        \thline
         Model  & \multicolumn{5}{c|}{FC-based} & \multicolumn{5}{c}{CNN-based} \\
        \thline
        Num. Param. & \multicolumn{5}{c|}{1.230 M} & \multicolumn{5}{c}{0.027 M} \\
        \thline
        Method   & Baseline (ML) & EBFlow (SML) & EBFlow (SSM) & EBFlow (DSM) & EBFlow (FDSSM) & Baseline (ML) & EBFlow (SML) & EBFlow (SSM) & EBFlow (DSM) & EBFlow (FDSSM) \\
        \thline
        NLL ($\downarrow$)  & 1092.4 $\pm$ 0.1 & \textbf{1092.3 $\pm$ 0.6} & 1092.8 $\pm$ 0.3 & 1099.2 $\pm$ 0.2 & 1104.1 $\pm$ 0.5 & 1101.3 $\pm$ 1.3 & \textbf{1098.3 $\pm$ 6.6} & 1107.5 $\pm$ 1.4 & 1109.5 $\pm$ 2.4 & 1122.1 $\pm$ 3.1 \\
        Bits/Dim ($\downarrow$)  & \textbf{2.01 $\pm$ 0.00} & \textbf{2.01 $\pm$ 0.00} & \textbf{2.01 $\pm$ 0.00} & 2.02 $\pm$ 0.00 & 2.03 $\pm$ 0.00 & 2.03 $\pm$ 0.00 & \textbf{2.02 $\pm$ 0.01} & 2.03 $\pm$ 0.00 & 2.04 $\pm$ 0.00 & 2.06 $\pm$ 0.01 \\
        \thline 
        Batch/Sec. ($\uparrow$) & 8.00 & 12.27 & 33.11 & 66.67 & \textbf{130.21} & 0.21 & 0.29 & 7.09 & 18.32 & \textbf{38.76} \\
        \thline
        \thline
            \multicolumn{11}{c}{CIFAR-10 ($D=3,072$)} \\
        \thline
         Model  & \multicolumn{5}{c|}{FC-based} & \multicolumn{5}{c}{CNN-based} \\
        \thline
        Num. Param. & \multicolumn{5}{c|}{18.881 M} & \multicolumn{5}{c}{0.241 M} \\
        \thline
         Method  & Baseline (ML) & EBFlow (SML) & EBFlow (SSM) & EBFlow (DSM) & EBFlow (FDSSM) & Baseline (ML) & EBFlow (SML) & EBFlow (SSM) & EBFlow (DSM) & EBFlow (FDSSM) \\
        \thline
        NLL ($\downarrow$)  &\textbf{11912.9 $\pm$ 10.5} & 11915.6 $\pm$ 5.6 & 11917.7 $\pm$ 15.5 & 11940.0 $\pm$ 6.6 & 12347.8 $\pm$ 6.8 & \textbf{11408.7 $\pm$ 26.7} & 11553.6$\pm$ 151.7 & 11435.5 $\pm$ 12.0 & 11462.3 $\pm$ 7.9 & 11766.0 $\pm$ 36.8 \\
        Bits/Dim ($\downarrow$)  &\textbf{5.59 $\pm$ 0.00} & 5.60 $\pm$ 0.00 & 5.60 $\pm$ 0.01 & 5.61 $\pm$ 0.00 & 5.80 $\pm$ 0.00 & \textbf{5.36 $\pm$ 0.01} & 5.41 $\pm$ 0.07 & 5.37 $\pm$ 0.00 & 5.38 $\pm$ 0.00 & 5.54 $\pm$ 0.02 \\
        \thline 
        Batch/Sec. ($\uparrow$) & 5.05 & 7.35 & 29.85 & 57.14 & \textbf{62.50} & 0.02 & 0.03 & 7.35 & 18.41 & \textbf{39.84} \\
        \boldtoprule
    \end{tabular}}
\end{center}
\end{table*}
\begin{figure}[t]
    \centering
    \footnotesize
    \includegraphics[width=\linewidth]{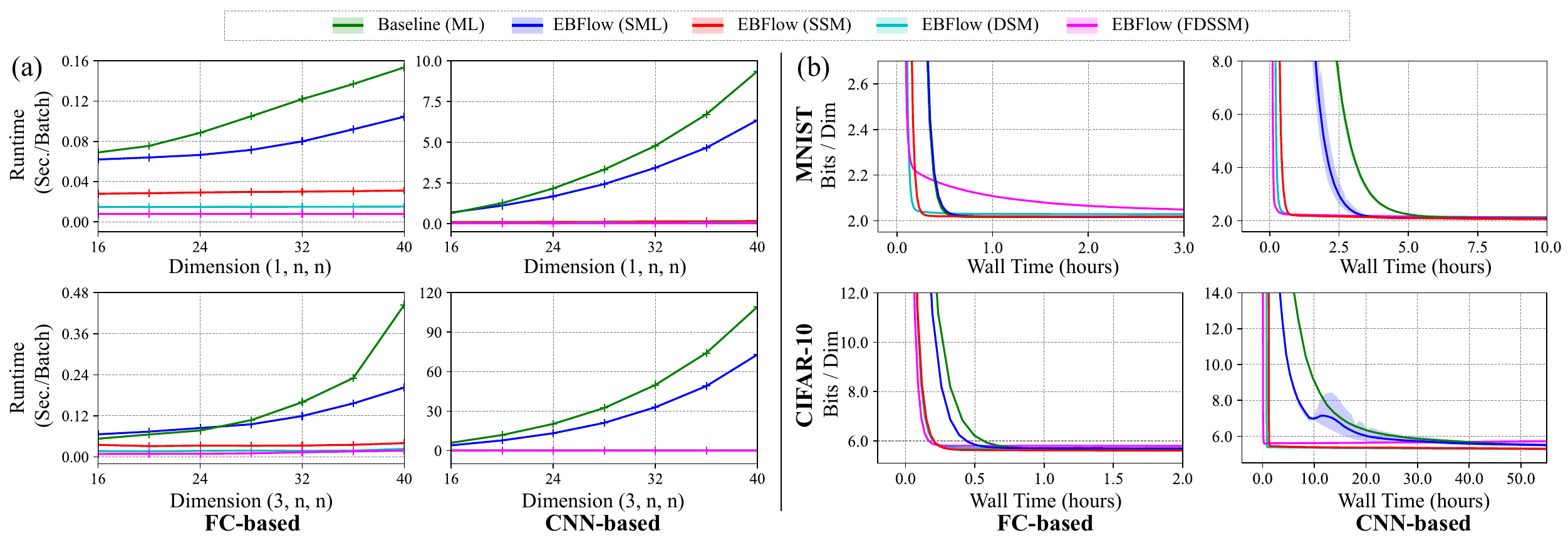}
    \vspace{-1.5em}
    \caption{(a) A runtime comparison of calculating the gradients of different objectives for different input sizes ($D$). The input sizes are $(1, n, n)$ and $(3, n, n)$, with the x-axis in the figures representing $n$. In the format $(c,h,w)$, the first value indicates the number of channels, while the remaining values correspond to the height and width of the input data. The curves depict the evaluation results in terms of the mean of three independent runs. (b) A comparison of the training efficiency of the FC-based and CNN-based models evaluated on the validation set of MNIST and CIFAR-10. Each curve and the corresponding shaded area depict the mean and confidence interval of three independent runs.}
    \label{fig:runtime_efficiency}
\end{figure}

\subsection{Efficiency Evaluation on the MNIST and CIFAR-10 Datasets}
\label{sec:experiments:high_dim}
In this section, we inspect the influence of data dimension $D$ on the training efficiency of flow-based models. To provide a thorough comparison, we employ two types of model architectures and train them on two datasets with different data dimensions: the MNIST~\citep{deng2012mnist} ($D=1\times 28\times 28$) and CIFAR-10~\citep{Krizhevsky2009LearningML} ($D=3\times 32\times 32$) datasets.

The first model architecture is exactly the same as that adopted by~\citep{Gresele2020RelativeGO}. It is an architecture consisting of two fully-connected layers and a smoothed leaky ReLU non-linear layer in between. The second model is a parametrically efficient variant of the first model. It replaces the fully-connected layers with convolutional layers and increases the depth of the model to six convolutional blocks. Between every two convolutional blocks, a squeeze operation~\citep{Dinh2016DensityEU} is inserted to enlarge the receptive field. In the following paragraphs, we refer to these models as `FC-based' and `CNN-based' models, respectively. 

The performance of the FC-based and CNN-based models is measured using the negative log likelihood (NLL) metric (i.e., $\E_{p_\rvx(\vx)}[-\log p(\vx;\param)]$), which differs from the intractable KL divergence by a constant. In addition, its normalized variant, the Bits/Dim metric~\citep{Oord2016ConditionalIG}, is also measured and reported. The algorithms are implemented using \texttt{PyTorch}~\citep{Paszke2019PyTorchAI} with automatic differentiation~\citep{Griewank2000EvaluatingD}, and the runtime is measured on NVIDIA Tesla V100 GPUs. In the subsequent paragraphs, we assess the models through scalability analysis, performance evaluation, and training efficiency examination.

\textbf{Scalability.}~To demonstrate the scalability of KL-divergence-based (i.e., $\ml$ and $\sml$) and Fisher-divergence-based (i.e., $\ssm$, $\dsm$, and $\fdssm$) objectives used in EBFlow and the baseline method, we first present a runtime comparison for different choices of the input data size $D$. The results presented in Fig.~\ref{fig:runtime_efficiency}~(a) reveal that Fisher-divergence-based objectives can be computed more efficiently than KL-divergence-based objectives. Moreover, the sampling-based objective $\sml$ used in EBFlow, which excludes the calculation of $Z(\param)$ in the computational graph, can be computed slightly faster than $\ml$ adopted by the baseline.

\textbf{Performance.}~Table~\ref{tab:datasets} demonstrates the performance of the FC-based and CNN-based models in terms of NLL on the MNIST and CIFAR-10 datasets. The results show that the models trained with Fisher-divergence-based objectives are able to achieve similar performance as those trained with KL-divergence-based objectives. Among the Fisher-divergence-based objectives, the models trained using $\ssm$ and $\dsm$ are able to achieve better performance in comparison to those trained using $\fdssm$. The runtime and performance comparisons above suggest that $\ssm$ and $\dsm$ can deliver better training efficiency than $\ml$ and $\sml$, since the objectives can be calculated faster while maintaining the models' performance on the NLL metric.

\begin{figure}
	\begin{minipage}{0.48\linewidth}
	\centering
    \footnotesize
    \includegraphics[width=0.98\linewidth]{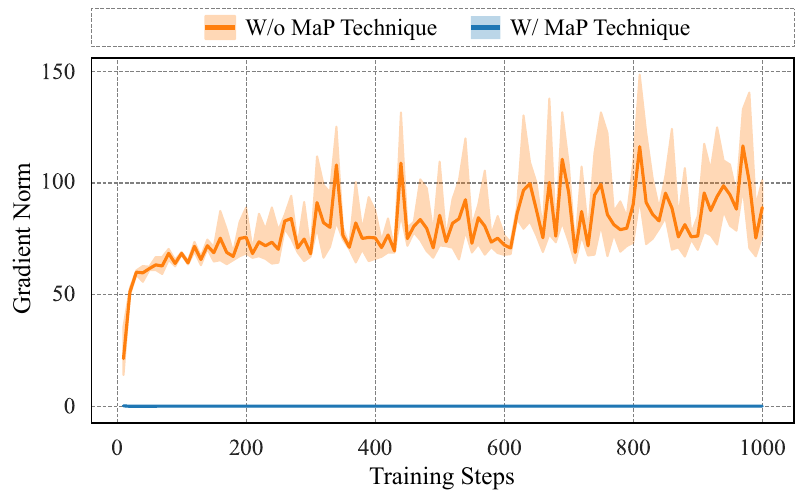}
    \vspace{-0.8em}
    \captionof{figure}{The norm of $\dd{\param}{\ssm(\param)}$ of an FC-based model trained on the MNIST dataset. The curves and shaded area depict the mean and 95\% confidence interval of three independent runs. }
    \label{fig:gradient_norm}
	\end{minipage}\hfill
	\begin{minipage}{0.5\linewidth}
	\renewcommand{\arraystretch}{1}
    \newcommand{\boldtoprule}{\midrule[0.5pt]}
    \newcommand{\thline}{\midrule[0.3pt]}
    \centering
    \captionof{table}{The results in terms of NLL of the FC-based and CNN-based models trained using SSM, DSM, and FDSSM losses on MNIST. The performance is reported in terms of the means and 95\% confidence intervals of three independent runs.}
    \label{tab:map_small}
    \vspace{-1em}
    \begin{center}
    \tiny
    \resizebox{\linewidth}{!}{
    \begin{tabular}{cc|ccc}
        \boldtoprule
          \multicolumn{5}{c}{FC-based} \\
        \thline
        EMA & MaP & EBFlow(SSM) & EBFlow(DSM) & EBFlow(FDSSM) \\
        \thline
                &         & 1757.5 $\pm$ 28.0  & 4660.3 $\pm$ 19.8 & 3267.0 $\pm$ 99.2  \\
        \cmark &         & 1720.5 $\pm$ 0.8 & 4455.0 $\pm$ 1.6 & 3166.3 $\pm$ 17.3  \\
        \cmark & \cmark & \textbf{1092.8 $\pm$ 0.3} & \textbf{1099.2 $\pm$ 0.2} & \textbf{1104.1 $\pm$ 0.5} \\
        \thline
         \multicolumn{5}{c}{CNN-based} \\
        \thline
        EMA & MaP & EBFlow(SSM) & EBFlow(DSM) & EBFlow(FDSSM) \\
        \thline
                &         & 3518.0 $\pm$ 33.9 & 3170.0 $\pm$ 7.2 & 3593.3 $\pm$ 12.5 \\
        \cmark &         & 3504.5 $\pm$ 2.4 & 3180.0 $\pm$ 2.9 & 3560.3 $\pm$ 1.7 \\
        \cmark & \cmark & \textbf{1107.5 $\pm$ 1.4} & \textbf{1109.5 $\pm$ 2.6} & \textbf{1122.1 $\pm$ 3.1}  \\
        \boldtoprule
    \end{tabular}}
\end{center}
	\end{minipage}
\end{figure}

\textbf{Training Efficiency.}~Fig.~\ref{fig:runtime_efficiency}~(b) presents the trends of NLL versus training wall time when $\ml$, $\sml$, $\ssm$, $\dsm$, and $\fdssm$ are adopted as the objectives. It is observed that EBFlow trained with SSM and DSM consistently attain better NLL in the early stages of the training. The improvement is especially notable when both $D$ and $L$ are large, as revealed for the scenario of training CNN-based models on the CIFAR-10 dataset. These experimental results provide evidence to support the use of score-matching methods for optimizing EBFlow.

\subsection{Analyses and Comparisons}
\label{sec:experiments:ablation}

\textbf{Ablation Study.}~Table~\ref{tab:map_small} presents the ablation results that demonstrate the effectiveness of the EMA and MaP techniques. It is observed that EMA is effective in reducing the variances. In addition, MaP significantly improves the overall performance. To further illustrate the influence of the proposed MaP technique on the score-matching methods, we compare the optimization processes with $\dd{\param}{\Df \ofmid{p_{\rvx_k}\|p_k}}$ and $\dd{\param}{\Df \ofmid{p_{\rvx}\|p_0}}=\dd{\param}{\E_{p_{\rvx_k}(\vx_k)} [\half \|(\dd{\vx_k}{\log (\frac{p_{\rvx_k}(\vx_k)}{p_k(\vx_k) }})) \prod_{i=1}^k \jacob_{g_i} \|^2]}$ (i.e., Lemma~\ref{lemma:fisher_derive}) by depicting the norm of their unbiased estimators $\dd{\param}{\ssm(\param)}$ calculated with and without applying the MaP technique in Fig.~\ref{fig:gradient_norm}. It is observed that the magnitude of $\norm{\dd{\param}{\ssm(\param)}}$ significantly decreases when MaP is incorporated into the training process. This could be attributed to the fact that the calculation of $\dd{\param}{\Df \ofmid{p_{\rvx_k}\|p_k}}$ excludes the calculation of $\prod_{i=1}^k \jacob_{g_i}$ in $\dd{\param}{\Df \ofmid{p_{\rvx}\|p_0}}$, which involves computing the derivatives of the numerically sensitive logit pre-processing layer.

\begin{wraptable}{r}{0.525\linewidth}
    \renewcommand{\arraystretch}{1}
    \newcommand{\boldtoprule}{\midrule[1.3pt]}
    \newcommand{\thline}{\midrule[0.3pt]}
    \centering
    \vspace{-1.2em}
    \caption{A comparison of performance and training complexity between EBFlow and a number of related works~\citep{song2019sliced, Gresele2020RelativeGO, Pang2020EfficientLO} on the MNIST dataset.}
    \vspace{-1.2em}
    \label{tab:benchmark}
    \begin{center}
    \footnotesize
        \resizebox{\linewidth}{!}{
        \begin{tabular}{c|cc|c}
            \boldtoprule
            & Method &  Complexity & NLL ($\downarrow$) \\
            \thline
            \multirow{3}{*}{$\Dkl$-Based} & Baseline (ML) & $\Oof{D^3L}$ & 1092.4 $\pm$ 0.1 \\
             & EBFlow (SML)  & $\Oof{D^3L}$ & \textbf{1092.3 $\pm$ 0.6}\\
             & Relative Grad.~\citep{Gresele2020RelativeGO} & $\Oof{D^2L}$ & 1375.2 $\pm$ 1.4\\
              \thline
            \multirow{6}{*}{$\Df$-Based} & EBFlow (SSM) & $\Oof{D^2L}$ & 1092.8 $\pm$ 0.3 \\
            & EBFlow (DSM)   & $\Oof{D^2L}$ &  1099.2 $\pm$ 0.2 \\
            & EBFlow (FDSSM) & $\Oof{D^2L}$ &  1104.1 $\pm$ 0.5 \\
            & SSM~\citep{song2019sliced} & -  & 3355 \\
            & DSM~\citep{Pang2020EfficientLO}  & -  & 3398 $\pm$ 1343\\
            & FDSSM~\citep{Pang2020EfficientLO} & - & 1647 $\pm$ 306 \\
            \boldtoprule
        \end{tabular}}
    \end{center}
    \vspace{-2em}
\end{wraptable}

\textbf{Comparison with Related Works.} Table~\ref{tab:benchmark} compares the performance of our method with a number of related works on the MNIST dataset. Our models trained with score-matching objectives using the same model architecture exhibit improved performance in comparison to the relative gradient method~\citep{Gresele2020RelativeGO}. In addition, 
when compared to the results in \citep{song2019sliced} and \citep{Pang2020EfficientLO}, our models deliver significantly improved performance over them. Please note that the results of~\citep{Gresele2020RelativeGO, song2019sliced, Pang2020EfficientLO} presented in Table~\ref{tab:benchmark} are obtained from their original papers.
\begin{figure}
	\begin{minipage}{0.5\linewidth}
	\centering
    \vspace{0.2em}
    \footnotesize
    \includegraphics[width=\linewidth]{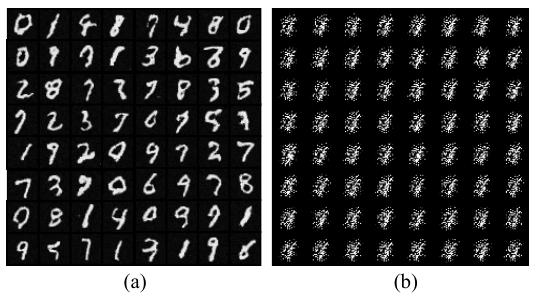}
    \vspace{-2em}
    \caption{A qualitative comparison between (a) our model (NLL=728) and (b) the model in~\citep{Pang2020EfficientLO} (NLL=1,637) on the inverse generation task.}
    \label{fig:qualitative}
	\end{minipage}\hfill
	\begin{minipage}{0.487\linewidth}
	\centering
    \vspace{0.4em}
    \footnotesize
    \includegraphics[width=0.985\linewidth]{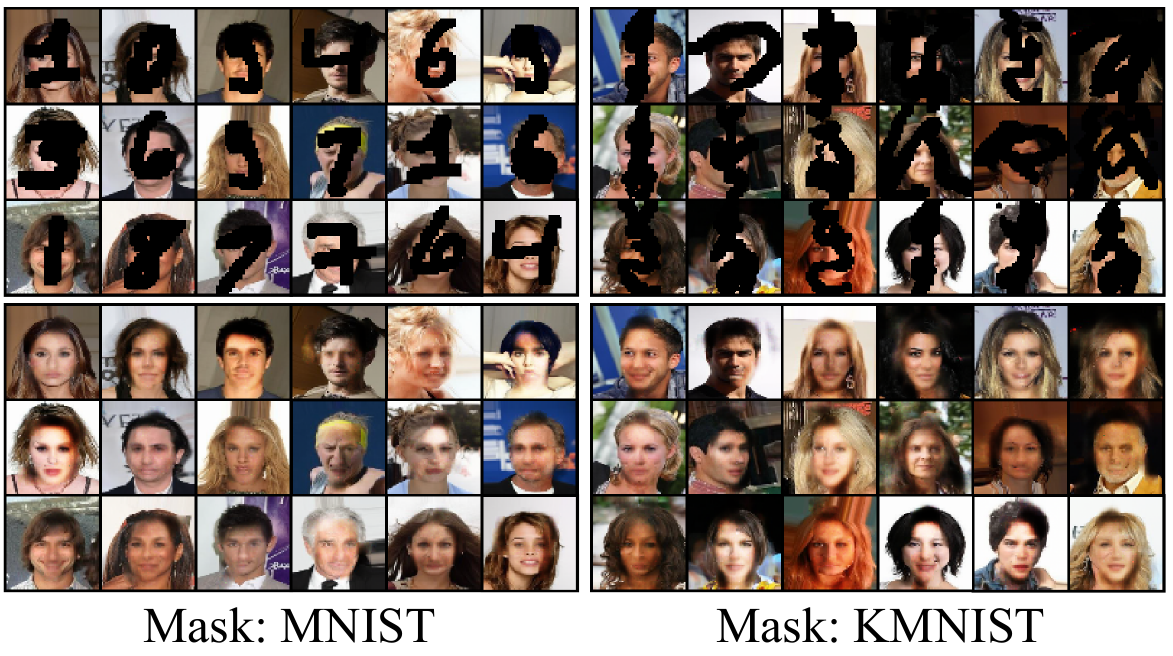}
    \vspace{-0.6em}
    \caption{A qualitative demonstration of the FC-based model trained using $\dsm$ on the data imputation task.}
    \label{fig:imputation}
	\end{minipage}
\end{figure}

\subsection{Application to Generation Tasks}
\label{sec:experiments:application}
The sampling process of EBFlow can be accomplished through the inverse function or an MCMC process. The former is a typical generation method adopted by flow-based models, while the latter is a more flexible sampling process that allows conditional generation without re-training the model. In the following paragraphs, we provide detailed explanations and visualized results of these tasks.

\textbf{Inverse Generation.}~One benefit of flow-based models is that $g^{-1}$ can be directly adopted as a generator. While inverting the weight matrices in linear transformations typically demands time complexity of $\Oof{D^3L}$, these inverse matrices are only required to be computed once $\param$ has converged, and can then be reused for subsequent inferences In this experiment, we adopt the Glow~\citep{Kingma2018GlowGF} model architecture and train it using our method with $\ssm$ on the MNIST dataset. We compare our visualized results with the current best flow-based model trained using the score matching objective~\citep{Pang2020EfficientLO}. The results of \citep{Pang2020EfficientLO} are generated using their officially released code with their best setup (i.e., FDSSM). As presented in Fig.~\ref{fig:qualitative}, the results generated using our model demonstrate significantly better visual quality than those of \citep{Pang2020EfficientLO}. 

\textbf{MCMC Generation.}~In comparison to the inverse generation method, the MCMC sampling process is more suitable for conditional generation tasks such as data imputation due to its flexibility~\citep{Nguyen2016PlugP}. For the imputation task, a data vector $\vx$ is separated as an observable part $\vx_O$ and a masked part $\vx_M$. The goal of imputation is to generate the masked part $\vx_M$ based on the observable part $\vx_O$. To achieve this goal, one can perform a Langevin MCMC process to update $\vx_M$ according to the gradient of the energy function $\dd{\vx}{E(\vx;\param)}$. Given a noise vector $\vz$ sampled from $\N(\mO,\mI)$ and a small step size $\alpha$, the process iteratively updates $\vx_M$ based on the following equation:
\begin{equation}
    \begin{aligned}
        &\vx_M^{(t+1)} = \vx_M^{(t)} - \alpha \dd{\vx_M^{(t)}}{E(\vx_O,\vx^{(t)}_M;\param)} + \sqrt{2\alpha} \vz, \\
    \end{aligned}
    \label{eq:inpainting}
\end{equation}where $\vx^{(t)}_M$ represents $\vx_M$ at iteration $t\in\{1,\cdots,T\}$, and $T$ is the total number of iterations. MCMC generation requires an overall cost of $\Oof{TD^2L}$, potentially more economical than the $\Oof{D^3L}$ computation of the inverse generation method. Fig.~\ref{fig:imputation} depicts the imputation results of the FC-based model trained using $\dsm$ on the CelebA~\citep{liu2015faceattributes} dataset ($D=3\times 64\times 64$). In this example, we implement the masking part $\vx_M$ using the data from the KMNIST~\citep{clanuwat2018deep} and MNIST~\citep{deng2012mnist} datasets.
\section{Conclusion}
\label{sec:conclusion}
In this paper, we presented EBFlow, a new flow-based modeling approach that associates the parameterization of flow-based and energy-based models. We showed that by optimizing EBFlow with score-matching objectives, the computation of Jacobian determinants for linear transformations can be bypassed, resulting in an improved training time complexity. In addition, we demonstrated that the training stability and performance can be effectively enhanced through the MaP and EMA techniques. Based on the improvements in both theoretical time complexity and empirical performance, our method exhibits superior training efficiency compared to maximum likelihood training.

\section*{Acknowledgement}
The authors gratefully acknowledge the support from the National Science and Technology Council (NSTC) in Taiwan under grant number MOST 111-2223-E-007-004-MY3, as well as the financial support from MediaTek Inc., Taiwan. The authors would also like to express their appreciation for the donation of the GPUs from NVIDIA Corporation and NVIDIA AI Technology Center (NVAITC) used in this work. Furthermore, the authors extend their gratitude to the National Center for High-Performance Computing (NCHC) for providing the necessary computational and storage resources.

\bibliographystyle{unsrt}
\bibliography{citations}
\newpage
\appendix
\onecolumn
\setcounter{section}{0}
\setcounter{equation}{0}
\setcounter{figure}{0}
\setcounter{table}{0}

\renewcommand{\thefigure}{A\arabic{figure}}
\renewcommand{\thetable}{A\arabic{table}}
\renewcommand{\theequation}{A\arabic{equation}}

\section{Appendix}
\label{apx}

\subsection{Derivations}
In the following subsections, we provide theoretical derivations. In Section~\ref{apx:derivation:properties}, we discuss the asymptotic convergence properties as well as the assumptions of score-matching methods. In Section~\ref{apx:derivations:interpretation}, we elaborate on the formulation of EBFlow (i.e., Eqs.~(\ref{eq:ebflow}) and (\ref{eq:ebflow_def})), and provide a explanation of their interpretation. Finally, in Section~\ref{apx:derivations:latent}, we present a theoretical analysis of KL divergence and Fisher divergence, and discuss the underlying mechanism behind the proposed MaP technique.

\subsubsection{Asymptotic Convergence Property of Score Matching}
\label{apx:derivation:properties}
In this subsection, we provide a formal description of the \textit{consistency} property of score matching. The description follows~\citep{song2019sliced} and the notations are replaced with those used in this paper. The regularity conditions for $p(\cdot\,;\param)$ are defined in Assumptions~\ref{asp:positiveness}$\sim$\ref{asp:Lipschitz}. In the following paragraph, the parameter space is defined as $\Theta$. In addition, $s(\vx;\param)\triangleq\dd{\vx}{\log p(\vx;\param)}=-\dd{\vx}{E(\vx;\param)}$ represents the score function. $\estsm(\param)\triangleq\frac{1}{N}\sum^{N}_{k=1} f(\rvx_k;\param)$ denotes an unbiased estimator of $\sm(\param)$, where $f(\vx;\param)\triangleq\half \norm{\dd{\vx}{E(\vx;\param)}}^2-\trace{\ddd{\vx}{E(\vx;\param)}}=\half \norm{s(\vx;\param)}^2+\trace{\dd{\vx}{s(\vx;\param)}}$ and $\{\rvx_1,\cdots,\rvx_N\}$ represents a collection of i.i.d. samples drawn from $p_\rvx$. For notational simplicity, we denote $\partial h(\vx;\param)\triangleq \dd{\vx}{h(\vx;\param)}$ and $\partial_i h_j(\vx;\param)\triangleq \dd{\vx_i}{h_j(\vx;\param)}$, where $h_j(\vx;\param)$ denotes the $j$-th element of $h$.

\begin{assumption}
\label{asp:positiveness}(Positiveness) $p(\vx;\param)>0$ and $p_\rvx(\vx)>0$, $\forall \param \in \Theta$, $\forall \vx \in \Rd$.
\end{assumption}

\begin{assumption}
\label{asp:regularity}(Regularity of the score functions) The parameterized score function $s(\vx;\param)$ and the true score function $\dd{\vx}{\log p_\rvx(\vx)}$ are both continuous and differentiable. In addition, their expectations $\E_{p_\rvx(\vx)}\ofmid{s(\vx;\param)}$ and $\E_{p_\rvx(\vx)}\ofmid{\dd{\vx}{\log p_\rvx(\vx)}}$ are finite. (i.e., $\E_{p_\rvx(\vx)}\ofmid{s(\vx;\param)}<\infty$ and $\E_{p_\rvx(\vx)}\ofmid{\dd{\vx}{\log p_\rvx(\vx)}}<\infty$)
\end{assumption}

\begin{assumption}
\label{asp:boundary}(Boundary condition) $\lim_{\norm{\vx}\to \infty} p_\rvx(\vx) s(\vx;\param) = 0$, $\forall \param \in \Theta$.
\end{assumption}

\begin{assumption}
\label{asp:compactness}(Compactness) The parameter space $\Theta$ is compact.
\end{assumption}

\begin{assumption}
\label{asp:identifiability}(Identifiability) There exists a set of parameters $\param^{*}$ such that $p_\rvx(\vx)=p(\vx;\param^*)$, where $\param^* \in \Theta$, $\forall \vx \in \Rd$.
\end{assumption}

\begin{assumption}
\label{asp:uniqueness}(Uniqueness) $\theta \neq \param^* \Leftrightarrow p(\vx;\theta)\neq p(\vx;\param^*)$, where $\param,\param^* \in \Theta$, $\vx \in \Rd$.
\end{assumption}

\begin{assumption}
\label{asp:Lipschitz}(Lipschitzness of $f$) The function $f$ is Lipschitz continuous w.r.t. $\param$, i.e., $\abs{f(\vx;\param_1)-f(\vx;\param_2)}\leq L(\vx) \norm{\param_1-\param_2}_2$, $\forall \param_1, \param_2\in \Theta$, where $L(\vx)$ represents a Lipschitz constant satisfying $\E_{p_\rvx(\vx)}\ofmid{L(\vx)}<\infty$.
\end{assumption}

\begin{theorem}
\label{thm:consis_asym}
(Consistency of a score-matching estimator~\citep{song2019sliced}) The score-matching estimator $\param_N \triangleq \mathrm{argmin}_{\param \in \Theta} \estsm$ is consistent, i.e.,
$$\param_N \xrightarrow[]{p} \param^*, \text{ as } N\to \infty.$$
\end{theorem}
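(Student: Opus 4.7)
The plan is to follow the classical M-estimator consistency recipe: establish uniform convergence in probability of $\estsm$ to $\sm$ over the parameter space, identify $\param^*$ as the unique minimizer of the population objective $\sm$, and conclude via an argmin continuity argument. All seven regularity assumptions will be used, with Lipschitzness plus compactness driving uniform convergence, and positiveness/regularity/boundary/identifiability/uniqueness driving the identification.

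First I would perform the integration-by-parts step (as in~\citep{JMLR:v6:hyvarinen05a}) to rewrite
\begin{equation*}
\sm(\param) = \Df[p_\rvx \,\|\, p(\cdot\,;\param)] + C,
\end{equation*}
where $C$ does not depend on $\param$. This step is legitimate under Assumptions~\ref{asp:regularity} (to differentiate under the expectation and handle the cross term) and~\ref{asp:boundary} (to kill the boundary contribution $\lim_{\norm{\vx}\to\infty} p_\rvx(\vx) s(\vx;\param)=0$). Combined with Assumption~\ref{asp:positiveness} so the Fisher divergence is well-defined and nonnegative, and Assumptions~\ref{asp:identifiability}-\ref{asp:uniqueness}, this will show that $\param^*$ is the unique global minimizer of $\sm$ on $\Theta$, with $\sm(\param^*)=C$ and $\sm(\param)>C$ for $\param\neq\param^*$.

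Next I would establish the uniform law of large numbers
\begin{equation*}
\sup_{\param\in\Theta} \abs{\estsm(\param) - \sm(\param)} \xrightarrow[]{p} 0, \qquad N\to\infty.
\end{equation*}
The pointwise LLN at a fixed $\param$ follows from the definition of $\estsm$ as an empirical mean of i.i.d. samples. To upgrade this to uniform convergence, I would use Assumption~\ref{asp:Lipschitz}: for any $\varepsilon>0$, cover the compact set $\Theta$ (Assumption~\ref{asp:compactness}) by finitely many balls of radius $\varepsilon$ with centers $\param_1,\dots,\param_M$, so that for any $\param$ in the ball around $\param_j$,
\begin{equation*}
\abs{\estsm(\param)-\estsm(\param_j)} \leq \tfrac{1}{N}\sum_{k=1}^N L(\rvx_k)\, \varepsilon, \qquad \abs{\sm(\param)-\sm(\param_j)} \leq \E_{p_\rvx}[L(\rvx)]\,\varepsilon.
\end{equation*}
The envelope average $\frac{1}{N}\sum_k L(\rvx_k)$ converges in probability to $\E_{p_\rvx}[L]<\infty$ by the ordinary LLN, and the finitely many pointwise errors $\estsm(\param_j)-\sm(\param_j)$ tend to zero in probability. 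Taking $\varepsilon\to 0$ after $N\to\infty$ yields the uniform convergence. This is the step I expect to be the most technical, essentially a standard but somewhat fiddly chaining/covering argument.

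Finally I would conclude by a standard argmax (here argmin) continuity argument. Fix any neighborhood $U$ of $\param^*$ in $\Theta$. By continuity of $\sm$ (inherited from Assumption~\ref{asp:Lipschitz}) and compactness of $\Theta\setminus U$, together with the fact that $\param^*$ is the unique minimizer, there exists $\delta>0$ with $\inf_{\param\in\Theta\setminus U}\sm(\param) \geq \sm(\param^*)+\delta$. On the event where $\sup_\param\abs{\estsm-\sm}<\delta/2$ (which has probability tending to $1$), any minimizer $\param_N$ of $\estsm$ must satisfy $\estsm(\param_N)\leq \estsm(\param^*) < \sm(\param^*)+\delta/2$, forcing $\param_N\in U$. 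Hence $\param_N\xrightarrow[]{p} \param^*$, completing the proof.
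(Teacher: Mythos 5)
Your proposal is correct and follows exactly the route the paper indicates: the paper itself gives only a two-sentence sketch (Assumptions~\ref{asp:positiveness}--\ref{asp:boundary} identify $\param^*$ as the minimizer of $\sm$ via the Fisher-divergence identity, and Assumptions~\ref{asp:compactness}--\ref{asp:Lipschitz} yield uniform convergence) and defers the details to Corollary 1 of~\citep{song2019sliced}, which is precisely the Wald-type M-estimator argument you spell out. Your write-up simply fills in the covering/argmin-continuity details that the paper leaves to the citation, so there is nothing to flag.
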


Assumptions~\ref{asp:positiveness}$\sim$\ref{asp:boundary} are the conditions that ensure $\dd{\param}{\Df \ofmid{p_\rvx(\vx)\|p(\vx;\param)}}=\dd{\param}{\sm(\param)}$. Assumptions~\ref{asp:compactness}$\sim$\ref{asp:Lipschitz} lead to the uniform convergence property~\citep{song2019sliced} of a score-matching estimator, which gives rise to the \textit{consistency} property. The detailed derivation can be found in Corollary 1 in~\citep{song2019sliced}. In the following Lemma~\ref{lemma:lip_f} and Proposition~\ref{prop:lip_f}, we examine the sufficient condition for $g$ and $\prior$ to satisfy Assumption~\ref{asp:Lipschitz}.

\begin{lemma}(Sufficient condition for the Lipschitzness of $f$)
\label{lemma:lip_f}
The function $f(\vx;\param)=\half \norm{s(\vx;\param)}^2+\trace{\dd{\vx}{s(\vx;\param)}}$ is Lipschitz continuous if the score function $s(\vx;\param)$ satisfies the following conditions: $\forall \param,\param_1,\param_2\in \Theta$, $\forall i \in \{1,\cdots,D\}$,
$$\norm{s(\vx;\param)}_2\leq L_1(\vx),$$
$$\norm{s(\vx;\param_1) - s(\vx;\param_2)}_2\leq L_2(\vx)\norm{\param_1 - \param_2}_2,$$
$$\norm{\partial_i s(\vx;\param_1) - \partial_i s(\vx;\param_2)}_2\leq L_3(\vx)\norm{\param_1 - \param_2}_2,$$
where $L_1$, $L_2$, and $L_3$ are Lipschitz constants satisfying $\E_{p_\rvx(\vx)}\ofmid{L_1(\vx)}<\infty$, $\E_{p_\rvx(\vx)}\ofmid{L_2(\vx)}<\infty$, and $\E_{p_\rvx(\vx)}\ofmid{L_3(\vx)}<\infty$.
\end{lemma}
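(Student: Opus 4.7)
The plan is to split $f(\vx;\param) = \tfrac{1}{2}\|s(\vx;\param)\|^2 + \trace{\partial s(\vx;\param)}$ into its quadratic and trace parts, establish Lipschitzness of each separately, and combine via the triangle inequality. So the target bound will come from writing
\begin{equation*}
\abs{f(\vx;\param_1) - f(\vx;\param_2)} \le \tfrac{1}{2}\bigl|\|s(\vx;\param_1)\|^2 - \|s(\vx;\param_2)\|^2\bigr| + \bigl|\trace{\partial s(\vx;\param_1) - \partial s(\vx;\param_2)}\bigr|,
\end{equation*}
and bounding the two terms individually.

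First I would handle the quadratic piece via the polarization identity $\|\va\|^2-\|\vb\|^2=(\va-\vb)^\top(\va+\vb)$ with $\va=s(\vx;\param_1)$ and $\vb=s(\vx;\param_2)$, followed by Cauchy–Schwarz. The pointwise bound $\norm{s(\vx;\param)}_2\le L_1(\vx)$ gives $\norm{\va+\vb}_2\le 2 L_1(\vx)$, while the Lipschitz hypothesis on $s$ gives $\norm{\va-\vb}_2\le L_2(\vx)\norm{\param_1-\param_2}_2$. Together they yield a bound of $L_1(\vx)L_2(\vx)\norm{\param_1-\param_2}_2$ on the quadratic part.

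Next I would expand the trace as $\trace{\partial s(\vx;\param)} = \sum_{i=1}^D \partial_i s_i(\vx;\param)$ and note that each scalar difference $|\partial_i s_i(\vx;\param_1)-\partial_i s_i(\vx;\param_2)|$ is bounded by $\norm{\partial_i s(\vx;\param_1)-\partial_i s(\vx;\param_2)}_2 \le L_3(\vx)\norm{\param_1-\param_2}_2$ using the third hypothesis. Summing over the $D$ coordinates gives $D\, L_3(\vx)\norm{\param_1-\param_2}_2$ for the trace piece. Combining the two estimates yields $L(\vx) = L_1(\vx) L_2(\vx) + D\, L_3(\vx)$.

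The main obstacle is verifying $\E_{p_\rvx(\vx)}[L(\vx)]<\infty$. The contribution $D\,\E[L_3(\vx)]$ is finite by assumption, but the product term $\E[L_1(\vx) L_2(\vx)]$ does not follow from $\E[L_1],\E[L_2]<\infty$ alone. I would resolve this by either (i) strengthening the hypothesis so that $L_1,L_2\in L^2(p_\rvx)$ and invoking Cauchy–Schwarz to get $\E[L_1 L_2]\le \E[L_1^2]^{1/2}\E[L_2^2]^{1/2}$, or (ii) observing that in the typical application one factor (say $L_1$) can be taken to be a deterministic constant on the compact parameter space $\Theta$ of Assumption~\ref{asp:compactness}, so that $L_1 L_2$ is an integrable multiple of $L_2$. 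Everything else in the argument reduces to Cauchy–Schwarz plus two triangle inequalities and should be essentially mechanical.
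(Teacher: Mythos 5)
Your proposal follows the same decomposition as the paper (quadratic part plus trace part, combined by the triangle inequality), and your treatment of the quadratic part is identical in substance: the polarization identity you use is exactly the add-and-subtract splitting the paper performs before applying Cauchy--Schwarz and the hypotheses on $L_1$ and $L_2$, yielding the same $L_1(\vx)L_2(\vx)\norm{\param_1-\param_2}_2$ contribution. Where you diverge is the trace term: the paper bounds $\abs{\trace{A}}\leq D\norm{A}_2$ via Von Neumann's trace inequality and then passes to a Frobenius-type bound $\norm{A}_2\leq \sqrt{\sum_i\norm{\va_i}_2^2}$, arriving at $D\sqrt{D}\,L_3(\vx)$, whereas you simply write the trace as the sum of the $D$ diagonal entries and bound each $\abs{\partial_i s_i(\vx;\param_1)-\partial_i s_i(\vx;\param_2)}$ by the norm of the column difference, getting $D\,L_3(\vx)$. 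Your route is both more elementary and sharper by a factor of $\sqrt{D}$; nothing is lost. Finally, you are right to flag that $\E[L_1]<\infty$ and $\E[L_2]<\infty$ do not imply $\E[L_1L_2]<\infty$ --- the paper's own proof produces the Lipschitz constant $L_1(\vx)L_2(\vx)+D\sqrt{D}L_3(\vx)$ and silently leaves the integrability of the product unaddressed, even though Assumption~\ref{asp:Lipschitz} requires $\E_{p_\rvx(\vx)}[L(\vx)]<\infty$. Either of your proposed repairs (an $L^2$ hypothesis with Cauchy--Schwarz, or taking $L_1$ uniformly bounded over the compact $\Theta$) is a reasonable way to close a gap that the paper does not acknowledge.
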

\begin{proof}
The Lipschitzness of $f$ can be guaranteed by ensuring the Lipschitzness of $\norm{s(\vx;\param)}_2^2$ and $\trace{\partial s(\vx;\param)}$.

\textbf{Step 1.} (Lipschitzness of $\norm{s(\vx;\param)}_2^2$)
\begin{equation*}
\begin{aligned}
&\abs{\norm{s(\vx;\param_1)}_2^2 - \norm{s(\vx;\param_2)}_2^2} \\
&= \abs{s(\vx;\param_1)^Ts(\vx;\param_1)-s(\vx;\param_2)^Ts(\vx;\param_2)}\\
&= \abs{\of{s(\vx;\param_1)^Ts(\vx;\param_1)-s(\vx;\param_1)^Ts(\vx;\param_2)}+\of{s(\vx;\param_1)^Ts(\vx;\param_2)-s(\vx;\param_2)^Ts(\vx;\param_2)}}\\
&= \abs{s(\vx;\param_1)^T\of{s(\vx;\param_1)-s(\vx;\param_2)}+s(\vx;\param_2)^T\of{s(\vx;\param_1)-s(\vx;\param_2)}}\\
&\stackrel{(i)}{\leq} \abs{s(\vx;\param_1)^T\of{s(\vx;\param_1)-s(\vx;\param_2)}}+\abs{s(\vx;\param_2)^T\of{s(\vx;\param_1)-s(\vx;\param_2)}}\\
&\stackrel{(ii)}{\leq} \norm{s(\vx;\param_1)}_2\norm{s(\vx;\param_1)-s(\vx;\param_2)}_2+\norm{s(\vx;\param_2)}_2\norm{s(\vx;\param_1)-s(\vx;\param_2)}_2\\
&\stackrel{(iii)}{\leq} L_1(\vx)\norm{s(\vx;\param_1)-s(\vx;\param_2)}_2+L_1(\vx)\norm{s(\vx;\param_1)-s(\vx;\param_2)}_2\\
&\stackrel{(iii)}{\leq} 2 L_1(\vx)L_2(\vx)\norm{\param_1-\param_2}_2,\\
\end{aligned}
\end{equation*}
where $(i)$ is based on triangle inequality, $(ii)$ is due to Cauchy–Schwarz inequality, and $(iii)$ follows from the listed assumptions.

\textbf{Step 2.} (Lipschitzness of $\trace{\partial s(\vx;\param)}$)
\begin{equation*}
\begin{aligned}
\abs{\trace{\partial s(\vx;\param_1)} - \trace{\partial s(\vx;\param_2)}}
&= \abs{\trace{\partial s(\vx;\param_1) -\partial s(\vx;\param_2)}}\\
&\stackrel{(i)}{\leq} D\norm{\partial s(\vx;\param_1) -\partial s(\vx;\param_2)}_2\\
&\stackrel{(ii)}{\leq} D\sqrt{\sum_i \norm{\partial_i s(\vx;\param_1) -\partial_i s(\param_2)}^2_2 }\\
&\stackrel{(iii)}{\leq} D\sqrt{D L_3^2(\vx)\norm{\param_1 - \param_2}^2_2 }\\
&= D\sqrt{D} L_3(\vx)\norm{\param_1 - \param_2}_2\\
\end{aligned}
\end{equation*}
where $(i)$ holds by Von Neumann’s trace inequality. $(ii)$ is due to the property $\norm{A}_2 \leq \sqrt{\sum_i \norm{\va_i}^2_2}$, where $\va_i$ is the column vector of $A$. $(iii)$ holds by the listed assumptions.

Based on Steps 1 and 2, the Lipschitzness of $f$ is guaranteed, since
\begin{equation*}
\begin{aligned}
\abs{f(\vx;\param_1)-f(\vx;\param_2)}
&= \abs{\half \norm{s(\vx;\param_1)}^2+\trace{\dd{\vx}{s(\vx;\param_1)}}-\half \norm{s(\vx;\param_2)}^2-\trace{\dd{\vx}{s(\vx;\param_2)}}}\\
&= \abs{\half \norm{s(\vx;\param_1)}^2-\half \norm{s(\vx;\param_2)}^2+\trace{\dd{\vx}{s(\vx;\param_1)}}-\trace{\dd{\vx}{s(\vx;\param_2)}}}\\
&\leq \half\abs{ \norm{s(\vx;\param_1)}^2- \norm{s(\vx;\param_2)}^2}+\abs{\trace{\dd{\vx}{s(\vx;\param_1)}}-\trace{\dd{\vx}{s(\vx;\param_2)}}}\\
&\leq L_1(\vx)L_2(\vx)\norm{\param_1-\param_2}_2+D\sqrt{D} L_3(\vx)\norm{\param_1 - \param_2}_2\\
&= \of{L_1(\vx)L_2(\vx)+D\sqrt{D} L_3(\vx)}\norm{\param_1 - \param_2}_2.\\
\end{aligned}
\end{equation*}
\end{proof}

\begin{proposition}(Sufficient condition for the Lipschitzness of $f$)
\label{prop:lip_f}
The function $f$ is Lipschitz continuous if $g(\vx;\param)$ has bounded first, second, and third-order derivatives, i.e., $\forall i,j\in \{1,\cdots,D\}$, $\forall \param\in \Theta$.
$$\norm{\jacob_g(\vx;\param)}_2\leq l_1(\vx),\norm{\partial_i\jacob_g(\vx;\param)}_2\leq l_2(\vx),\norm{\partial_i\partial_j\jacob_g(\vx;\param)}_2\leq l_3(\vx),$$
and smooth enough on $\Theta$, i.e., $\param_1, \param_2\in \Theta$:
$$ \norm{g(\vx;\param_1)-g(\vx;\param_2)}_2\leq r_0(\vx) \norm{\param_1-\param_2}_2,$$
$$ \norm{\jacob_g(\vx;\param_1)-\jacob_g(\vx;\param_2)}_2\leq r_1(\vx) \norm{\param_1-\param_2}_2,$$
$$ \norm{\partial_i\jacob_g(\vx;\param_1)-\partial_i\jacob_g(\vx;\param_2)}_2\leq r_2(\vx) \norm{\param_1-\param_2}_2.$$
$$ \norm{\partial_i\partial_j\jacob_g(\vx;\param_1)-\partial_i\partial_j\jacob_g(\vx;\param_2)}_2\leq r_3(\vx) \norm{\param_1-\param_2}_2.$$
In addition, it satisfies the following conditions:
$$\norm{\jacob^{-1}_g(\vx;\param)}_2\leq l_1^{'}(\vx), \norm{\partial_i\jacob^{-1}_g(\vx;\param)}_2\leq l_2^{'}(\vx), $$
$$ \norm{\jacob^{-1}_g(\vx;\param_1)-\jacob^{-1}_g(\vx;\param_2)}_2\leq r^{'}_1(\vx) \norm{\param_1-\param_2}_2,$$
$$ \norm{\partial_i\jacob^{-1}_g(\vx;\param_1)-\partial_i\jacob^{-1}_g(\vx;\param_2)}_2\leq r^{'}_2(\vx) \norm{\param_1-\param_2}_2,$$
where $\jacob^{-1}_g$ represents the inverse matrix of $\jacob_g$. Furthermore, the prior distribution $p_\rvu$ satisfies:
$$\norm{s_\rvu(\vu)}\leq t_1, \norm{\partial_i s_\rvu(\vu)}\leq t_2$$
$$\norm{s_\rvu(\vu_1)- s_\rvu(\vu_2)}_2 \leq t_3 \norm{\vu_1-\vu_2}_2,$$
$$\norm{\partial_i s_\rvu(\vu_1)- \partial_i s_\rvu(\vu_2)}_2 \leq t_4 \norm{\vu_1-\vu_2}_2,$$
where $s_\rvu(\vu)\triangleq \dd{\vu}{\log \prior(\vu)}$ is the score function of $p_\rvu$. The Lipschitz constants listed above (i.e., $l_1\sim l_3$, $r_0\sim r_3$, $l_1^{'}\sim l_2^{'}$, and $r_1^{'}\sim r_2^{'}$) have finite expectations.
\end{proposition}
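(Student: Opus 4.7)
The plan is to verify the three hypotheses of Lemma~\ref{lemma:lip_f} by decomposing the score function into a prior-gradient contribution and a log-determinant contribution, and then bounding each summand through triangle inequality, Cauchy--Schwarz, and add-subtract telescoping. By the change-of-variable identity $\log p(\vx;\param) = \log\prior(g(\vx;\param)) + \log|\det\jacob_g(\vx;\param)|$ combined with the chain rule and Jacobi's formula for the derivative of a log-determinant, the score function reads
\[
s(\vx;\param) = \jacob_g^T(\vx;\param)\, s_\rvu(g(\vx;\param)) + w(\vx;\param), \qquad w_k(\vx;\param) = \trace{\jacob_g^{-1}(\vx;\param)\, \partial_{\vx_k}\jacob_g(\vx;\param)}.
\]
This decomposition explains the presence of the $\jacob_g^{-1}$ hypotheses and isolates the three objects that must be controlled: $\jacob_g$, $\jacob_g^{-1}$, and $s_\rvu \circ g$.

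Condition one of Lemma~\ref{lemma:lip_f} is immediate: Cauchy--Schwarz and submultiplicativity of the operator norm give $\norm{\jacob_g^T s_\rvu(g)}_2 \leq l_1(\vx)\, t_1$, while Von Neumann's trace inequality together with $\norm{\partial_{\vx_k}\jacob_g} \leq l_2(\vx)$ and $\norm{\jacob_g^{-1}} \leq l_1'(\vx)$ yields $|w_k| \leq \sqrt{D}\, l_1'(\vx) l_2(\vx)$, so one may take $L_1(\vx) := l_1(\vx) t_1 + D\, l_1'(\vx) l_2(\vx)$. For condition two, the standard add-subtract decomposition
\[
\jacob_g^T(\param_1)\,s_\rvu(g(\param_1)) - \jacob_g^T(\param_2)\,s_\rvu(g(\param_2)) = \jacob_g^T(\param_1)\bigl[s_\rvu(g(\param_1)) - s_\rvu(g(\param_2))\bigr] + \bigl[\jacob_g^T(\param_1) - \jacob_g^T(\param_2)\bigr] s_\rvu(g(\param_2))
\]
yields the bound $[\,l_1(\vx)\, t_3\, r_0(\vx) + t_1\, r_1(\vx)\,]\norm{\param_1-\param_2}$ upon invoking the hypotheses on $\jacob_g$, $s_\rvu$, and $g$; an analogous telescoping of $w_k(\param_1) - w_k(\param_2)$ around $\trace{\jacob_g^{-1}(\param_1)\partial_{\vx_k}\jacob_g(\param_2)}$, using $r_1'(\vx)$ and $r_2(\vx)$ alongside the uniform bounds $l_2(\vx)$ and $l_1'(\vx)$, supplies the remainder of $L_2(\vx)$.

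For condition three I would first differentiate in $\vx_i$ using the product and chain rules:
\[
\partial_i\bigl[\jacob_g^T s_\rvu(g)\bigr] = (\partial_i \jacob_g^T)\, s_\rvu(g) + \jacob_g^T\, [\partial s_\rvu](g)\, (\partial_i g), \qquad \partial_i w_k = \trace{\partial_i\jacob_g^{-1}\, \partial_{\vx_k}\jacob_g} + \trace{\jacob_g^{-1}\, \partial_i\partial_{\vx_k}\jacob_g},
\]
where $[\partial s_\rvu](\cdot)$ denotes the Jacobian of the prior score and $\partial_i g$ is the $i$-th column of $\jacob_g$. Each summand is then handled by the same add-subtract telescoping as in condition two, now invoking the additional hypotheses $l_3(\vx)$ (to absorb the newly appearing $\partial_i\partial_{\vx_k}\jacob_g$), $r_2(\vx), r_2'(\vx), r_3(\vx)$ (Lipschitzness in $\param$ of $\partial_i\jacob_g$, $\partial_i\jacob_g^{-1}$, and $\partial_i\partial_j\jacob_g$), and $t_2, t_4$ (uniform and Lipschitz bounds on $\partial s_\rvu$). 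Collecting the resulting pieces into $L_3(\vx)$ and invoking Lemma~\ref{lemma:lip_f} then finishes the proof.

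The hard part will be purely bookkeeping in condition three, where the simultaneous product and chain rules generate on the order of ten cross-terms, each requiring its own telescoping around a suitable intermediate point and its own combination of the listed Lipschitz constants. A secondary technical check is that the various sums and products of $l_i, l_i', r_i, r_i', t_j$ that appear inside $L_1, L_2, L_3$ still have finite expectation under $p_\rvx$; this is implicit in the closing phrase ``have finite expectations'' of the proposition and, if one wished to be pedantic, could be enforced by strengthening the hypothesis to require finite expectation of arbitrary polynomials in the listed constants.
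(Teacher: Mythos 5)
Your proposal is correct and follows essentially the same route as the paper's proof: decompose $s(\vx;\param)$ into the prior-score term $\jacob_g^T s_\rvu(g)$ and the log-determinant term $\vv$ with $\vv_k=\trace{\jacob_g^{-1}\partial_k\jacob_g}$ via Jacobi's formula, then verify the three hypotheses of Lemma~\ref{lemma:lip_f} by Cauchy--Schwarz, Von Neumann's trace inequality, and add--subtract telescoping, exactly as the paper does in its Steps 1--3 (the paper's Step 3 carries out the cross-term bookkeeping you defer). The only differences are immaterial constant factors in $L_1$ and your (correct) use of $t_3$ for the Lipschitz constant of $s_\rvu$ where the paper's Step 2.1 writes $t_2$.
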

\begin{proof} We show that the sufficient conditions stated in Lemma~\ref{lemma:lip_f} can be satisfied using the conditions listed above.

\textbf{Step 1.} (Sufficient condition of $\norm{s(\vx;\param)}_2\leq L_1(\vx)$) 

Since $\norm{s(\vx;\param)}_2 =\norm{\dd{\vx}{\log \prior(g(\vx;\param))}+\dd{\vx}{\log \abs{\det \jacob_g(\vx;\param)}}}_2 \leq \norm{\dd{\vx}{\log \prior(g(\vx;\param))}}_2+\norm{\dd{\vx}{\log \abs{\det \jacob_g(\vx;\param)}}}_2$, we first demonstrate that $\norm{\dd{\vx}{\log \prior(g(\vx;\param))}}_2$ and $\norm{\dd{\vx}{\log \abs{\det \jacob_g(\vx;\param)}}}_2$ are both bounded.

(1.1) $\norm{\dd{\vx}{\log \prior(g(\vx;\param))}}_2$ is bounded: 
\begin{equation*}
\begin{aligned}
\norm{\dd{\vx}{\log \prior(g(\vx;\param))}}_2
= \norm{\of{s_\rvu(g(\vx;\param))}^T \jacob_g(\vx;\param)}_2
\leq \norm{s_\rvu(g(\vx;\param))}_2 \norm{\jacob_g (\vx;\param)}_2 \leq t_1 l_1(\vx).
\end{aligned}
\end{equation*}

(1.2) $\norm{\dd{\vx}{\log \abs{\det \jacob_g(\vx;\param)}}}$ is bounded:
\begin{equation*}
\begin{aligned}
\norm{\dd{\vx}{\log \abs{\det \jacob_g(\vx;\param)}}} 
&= \norm{\abs{\det \jacob_g(\vx;\param)}^{-1}\dd{\vx}{\abs{\det \jacob_g(\vx;\param)}}} \\
&= \norm{\of{\det \jacob_g(\vx;\param)}^{-1}\dd{\vx}{\det \jacob_g(\vx;\param)}} \\
&\stackrel{(i)}{=} \norm{\of{\det \jacob_g(\vx;\param)}^{-1}\det \jacob_g(\vx;\param) \vv(\vx;\param)} \\
&= \norm{ \vv(\vx;\param)}, \\
\end{aligned}
\end{equation*}
where $(i)$ is derived using Jacobi's formula, and $\vv_i(\vx;\param) =\trace{\jacob_g^{-1}(\vx;\param)\partial_i\jacob_g(\vx;\param)}$. 
\begin{equation*}
\begin{aligned}
\norm{\vv(\vx;\param)} &=\sqrt{\sum_i\of{\trace{\jacob_g^{-1}(\vx;\param)\partial_i\jacob_g(\vx;\param)}}^2}\\
&\stackrel{(i)}{\leq} \sqrt{\sum_i D^2\norm{\jacob_g^{-1}(\vx;\param)\partial_i\jacob_g(\vx;\param)}^2_2}\\
&\stackrel{(ii)}{\leq} \sqrt{\sum_i D^2\norm{\jacob_g^{-1}(\vx;\param)}^2_2\norm{\partial_i\jacob_g(\vx;\param)}^2_2}\\
&\stackrel{(iii)}{\leq} \sqrt{ \sum_i D^2 l_1^{'2}(\vx)l_2^2(\vx)}\\
&= \sqrt{ D^3} l_1^{'}(\vx)l_2(\vx),
\end{aligned}
\end{equation*}
where $(i)$ holds by Von Neumann’s trace inequality, $(ii)$ is due to the property of matrix norm, and $(iii)$ is follows from the listed assumptions.

\textbf{Step 2.} (Sufficient condition of the Lipschitzness of $s(\vx;\param)$)

Since $s(\vx;\param) =\dd{\vx}{\log \prior(g(\vx;\param))}+\dd{\vx}{\log \abs{\det \jacob_g(\vx;\param)}}$, we demonstrate that $\dd{\vx}{\log \prior(g(\vx;\param))}$ and $\dd{\vx}{\log \abs{\det \jacob_g(\vx;\param)}}$ are both Lipschitz continuous on $\Theta$.

(2.1) Lipschitzness of $\dd{\vx}{\log \prior(g(\vx;\param))}$:
\begin{equation*}
\begin{aligned}
&\norm{\dd{\vx}{\log \prior(g(\vx;\param_1))}-\dd{\vx}{\log \prior(g(\vx;\param_2))}}_2 \\
&= \norm{\of{s_\rvu(g(\vx;\param_1))}^T \jacob_g (\vx;\param_1)- \of{s_\rvu(g(\vx;\param_2))}^T \jacob_g (\vx;\param_2)}_2 \\
&\stackrel{(i)}{\leq} \norm{s_\rvu(g(\vx;\param_1))}_2 \norm{\jacob_g (\vx;\param_1) - \jacob_g (\vx;\param_2)}_2 + 
\norm{s_\rvu(g(\vx;\param_1))- s_\rvu(g(\vx;\param_2))}_2 \norm{\jacob_g (\vx;\param_2)}_2 \\
&\stackrel{(ii)}{\leq} t_1 r_1(\vx) \norm{\param_1-\param_2}_2 + 
  t_2 l_1(\vx) \norm{g(\vx;\param_1)-g(\vx;\param_2)}_2\\
&\stackrel{(ii)}{\leq} t_1 r_1(\vx) \norm{\param_1-\param_2}_2 + 
 t_2 l_1(\vx) r_0(\vx)\norm{\param_1-\param_2}_2\\
 &= \of{t_1 r_1(\vx)+ t_2 l_1(\vx) r_0(\vx)} \norm{\param_1-\param_2}_2,\\
\end{aligned}
\end{equation*}
where $(i)$ is obtained using a similar derivation to Step 1 in Lemma~\ref{lemma:lip_f}, while $(ii)$ follows from the listed assumptions.

(2.2) Lipschitzness of $\dd{\vx}{\log \abs{\det \jacob_g(\vx;\param)}}$: 

Let $\mM(i,\vx;\param)\triangleq \jacob_g^{-1}(\vx;\param_1)\partial_i\jacob_g(\vx;\param)$. We first demonstrate that $\mM$ is Lipschitz continuous:
\begin{equation*}
\begin{aligned}
&\norm{\mM(i,\vx;\param_1)-\mM(i,\vx;\param_2)}_2\\
&= \norm{\jacob_g^{-1}(\vx;\param_1)\partial_i\jacob_g(\vx;\param_1)-\jacob_g^{-1}(\vx;\param_2)\partial_i\jacob_g(\vx;\param_2)}_2\\
&\stackrel{(i)}{\leq} \norm{\jacob_g^{-1}(\vx;\param_1)}_2 \norm{\of{\partial_i\jacob_g(\vx;\param_1)-\partial_i\jacob_g(\vx;\param_2)}}_2+\norm{\jacob_g^{-1}(\vx;\param_1)-\jacob_g^{-1}(\vx;\param_2)}_2\norm{\partial_i\jacob_g(\vx;\param_2)}_2\\
&\stackrel{(ii)}{\leq} l^{'}_1(\vx) r_2(\vx) \norm{\param_1-\param_2}_2 + l_2(\vx) r^{'}_1(\vx) \norm{\param_1-\param_2}_2 \\
&= \of{l^{'}_1(\vx) r_2(\vx)  + l_2(\vx) r^{'}_1(\vx)} \norm{\param_1-\param_2}_2, \\
\end{aligned}
\end{equation*}
where $(i)$ is obtained by an analogous derivation of the step 1 in Lemma~\ref{lemma:lip_f}, and $(ii)$ holds by the listed assumption.

The Lipschitzness of $\mM$ leads to the Lipschitzness of $\dd{\vx}{\log \abs{\det \jacob_g(\vx;\param)}}$, since:
\begin{equation*}
\begin{aligned}
&\norm{\dd{\vx}{\log \abs{\det \jacob_g(\vx;\param_1)}}-\dd{\vx}{\log \abs{\det \jacob_g(\vx;\param_2)}}}_2 \\
&= \norm{\vv(\vx;\param_1)-\vv(\vx;\param_2)}_2\\
&= \sqrt{\sum_i \of{\trace{\mM(i,\vx;\param_1)} - \trace{\mM(i,\vx;\param_2)}}^2}\\
&= \sqrt{\sum_i \of{\trace{\mM(i,\vx;\param_1) - \mM(i,\vx;\param_2)}}^2}\\
&\stackrel{(i)}{\leq} \sqrt{\sum_i D^2\norm{\mM(i,\vx;\param_1) - \mM(i,\vx;\param_2)}_2^2}\\
&\stackrel{(ii)}{\leq} \sqrt{ \sum_i D^2\of{l^{'}_1(\vx) r_2(\vx)  + l_2(\vx) r^{'}_1(\vx)}^2 \norm{\param_1-\param_2}^2_2}\\
&= \sqrt{ D^3}\of{l^{'}_1(\vx) r_2(\vx)  + l_2(\vx) r^{'}_1(\vx)} \norm{\param_1-\param_2}_2,\\
\end{aligned}
\end{equation*}
where $(i)$ holds by Von Neumann’s trace inequality, $(ii)$ is due to the Lipschitzness of $\mM$.

\textbf{Step 3.} (Sufficient condition of the Lipschitzness of $\partial_i s(\vx;\param)$)

$\partial_i s(\vx;\param)$ can be decomposed as $\of{\partial_i s_\rvu(g(\vx;\param))}^T\jacob_g (\vx;\param)$, $\of{s_\rvu(g(\vx;\param))}^T\partial_i\jacob_g (\vx;\param)$, and $\partial_i \ofmid{\vv (\vx;\param)}$ as follows:

\begin{equation*}
\begin{aligned}
\partial_i s(\vx;\param) 
&= \partial_i \ofmid{\of{s_\rvu(g(\vx;\param))}^T\jacob_g (\vx;\param)}+\partial_i \ofmid{\vv (\vx;\param)}\\
&=  \ofmid{\of{\partial_i s_\rvu(g(\vx;\param))}^T\jacob_g (\vx;\param)}+\ofmid{\of{s_\rvu(g(\vx;\param))}^T\partial_i\jacob_g (\vx;\param)}+\partial_i \ofmid{\vv (\vx;\param)}.\\
\end{aligned}
\end{equation*}

(3.1) The Lipschitzness of $\of{\partial_i s_\rvu(g(\vx;\param))}^T\jacob_g (\vx;\param)$ and $\of{s_\rvu(g(\vx;\param))}^T\partial_i\jacob_g (\vx;\param)$ can be derived using proofs similar to that in Step 2.1:
\begin{equation*}
\begin{aligned}
\norm{\of{\partial_i s_\rvu(g(\vx;\param_1))}^T\jacob_g (\vx;\param_1) - \of{\partial_i s_\rvu(g(\vx;\param_2))}^T\jacob_g (\vx;\param_2)}_2
&\leq \of{t_2 r_1(\vx)+t_4 r_0(\vx) l_1(\vx)} \norm{\param_1-\param_2}_2,\\
\norm{\of{s_\rvu(g(\vx;\param_1))}^T\partial_i \jacob_g (\vx;\param_1) - \of{s_\rvu(g(\vx;\param_2))}^T\partial_i \jacob_g (\vx;\param_2)}_2
&\leq \of{t_1 r_2(\vx)+t_3 r_0(\vx) l_2(\vx)} \norm{\param_1-\param_2}_2.
\end{aligned}
\end{equation*}

(3.2) Lipschitzness of $\partial_i \ofmid{\vv (\vx;\param)}$: 

Let $\partial_i \ofmid{\vv_j (\vx;\param)} \triangleq \partial_i \trace{\mM (j,\vx;\param)}=\trace{\partial_i \mM (j,\vx;\param)}$. We first show that $\partial_i \mM (j,\vx;\param)$ can be decomposed as:
\begin{equation*}
\begin{aligned}
\partial_i \mM (j,\vx;\param) = \partial_i \of{ \jacob_g^{-1}(\vx;\param)\partial_j\jacob_g(\vx;\param)} 
= \of{\partial_i \jacob_g^{-1}(\vx;\param)\partial_j\jacob_g(\vx;\param)} + \of{\jacob_g^{-1}(\vx;\param)\partial_i\partial_j\jacob_g(\vx;\param)}\\
\end{aligned}
\end{equation*}
The Lipschitz constant of $\partial_i \mM$ equals to $\of{l^{'}_2(\vx) r_2(\vx)+l_2(\vx) r^{'}_2(\vx)} + \of{l_1^{'}(\vx) r_3(\vx)+ l_3(\vx) r^{'}_1(\vx)}$ based on a similar derivation as in Step 3.1. The Lipschitzness of $\partial_i \mM (j,\vx;\param)$ leads to the Lipschitzness of $\partial_i \ofmid{\vv (\vx;\param)}$:
\begin{equation*}
\begin{aligned}
&\norm{\partial_i \ofmid{\vv (\vx;\param_1)} - \partial_i \ofmid{\vv (\vx;\param_2)}}_2\\
&= \sqrt{\sum_j \of{\trace{\partial_i \mM (j,\vx;\param_1)} - \trace{\partial_i \mM (j,\vx;\param_2)}}^2}\\
&= \sqrt{\sum_j \trace{\partial_i \mM (j,\vx;\param_1) - \partial_i \mM (j,\vx;\param_2)}^2}\\
&\stackrel{(i)}{\leq} \sqrt{\sum_j D^2\norm{\partial_i \mM (j,\vx;\param_1) - \partial_i \mM (j,\vx;\param_2)}^2_2}\\
&\stackrel{(ii)}{\leq} \sqrt{\sum_j D^2 \of{l^{'}_2(\vx) r_2(\vx)+l_2(\vx) r^{'}_2(\vx)+l_1^{'}(\vx) r_3(\vx)+ l_3(\vx) r^{'}_1(\vx)}^2 \norm{\param_1 -\param_2)}^2_2}\\
&= \sqrt{ D^3} \of{l^{'}_2(\vx) r_2(\vx)+l_2(\vx) r^{'}_2(\vx)+l_1^{'}(\vx) r_3(\vx)+ l_3(\vx) r^{'}_1(\vx)} \norm{\param_1 -\param_2)}_2\\
\end{aligned}
\end{equation*}
where $(i)$ holds by Von Neumann’s trace inequality, $(ii)$ is due to the Lipschitzness of $\partial_i \mM$.
\end{proof}

\subsubsection{Derivation of Eqs.~(\ref{eq:ebflow}) and (\ref{eq:ebflow_def})}
\label{apx:derivations:interpretation}
Energy-based models are formulated based on the observation that any continuous pdf $p(\vx;\param)$ can be expressed as a Boltzmann distribution $\exponential\of{-E(\vx;\param)} Z^{-1}(\param)$~\citep{LeCun2006ATO}, where the energy function $E(\cdot\,;\param)$ can be modeled as any scalar-valued continuous function. In EBFlow, the energy function $E(\vx;\param)$ is selected as $-\log (\prior \of{g(\vx;\theta)}\prod_{g_i\in \Sn}\abs{\det (\jacob_{g_i}(\vx_{i-1}\,;\param))})$ according to Eq.~(\ref{eq:ebflow_def}). This suggests that the normalizing constant $Z(\param)=\int \exponential\of{-E(\vx;\param)} d\vx$ is equal to $(\prod_{g_i\in \Sl}\abs{\det (\jacob_{g_i}(\param))} )^{-1}$ according to Lemma~\ref{lemma:intuition}.
\begin{lemma}
\label{lemma:intuition}
\begin{equation}
  \bigg(\prod_{g_i\in \Sl}\abs{\det (\jacob_{g_i}(\param))} \bigg)^{-1}= \int_{\vx\in \Rd} \prior \of{g(\vx;\theta)}\prod_{g_i\in \Sn}\abs{\det (\jacob_{g_i}(\vx_{i-1};\param))} d\vx.
\end{equation}
\end{lemma}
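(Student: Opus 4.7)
The plan is to derive the identity from the fact that $p(\cdot\,;\param)$ is a bona fide probability density function and therefore integrates to one over $\R^D$. Combining this normalization condition with the factorization in Eq.~(\ref{eq:flow}) and the key structural observation that the Jacobians of linear transformations are input-independent will yield the result.

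First I would write out the normalization identity $\int_{\vx\in\Rd} p(\vx;\param)\, d\vx = 1$, which holds because $p(\cdot\,;\param)$ is defined through the change of variable theorem applied to the invertible map $g$ pushed through the prior $\prior$. Substituting Eq.~(\ref{eq:flow}) into this identity and splitting the product of Jacobian determinants according to the partition $\{g_1,\ldots,g_L\} = \Sl \cup \Sn$ gives
\begin{equation*}
1 = \int_{\vx\in\Rd} \prior\of{g(\vx;\theta)} \prod_{g_i\in\Sn}\abs{\det(\jacob_{g_i}(\vx;\param))} \prod_{g_i\in\Sl}\abs{\det(\jacob_{g_i}(\vx;\param))}\, d\vx.
\end{equation*}

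The crucial step is to observe that if $g_i \in \Sl$ is a linear transformation represented by a weight matrix $\mW_i$ (possibly with a bias), then $\jacob_{g_i}(\vx;\param) = \mW_i$ does not depend on $\vx$. Hence $\prod_{g_i\in\Sl}\abs{\det(\jacob_{g_i}(\vx;\param))} = \prod_{g_i\in\Sl}\abs{\det(\jacob_{g_i}(\param))}$ is a constant with respect to $\vx$ and can be pulled outside the integral. Dividing both sides by this (strictly positive, by invertibility) constant then yields
\begin{equation*}
\bigg(\prod_{g_i\in\Sl}\abs{\det(\jacob_{g_i}(\param))}\bigg)^{-1} = \int_{\vx\in\Rd} \prior\of{g(\vx;\theta)} \prod_{g_i\in\Sn}\abs{\det(\jacob_{g_i}(\vx;\param))}\, d\vx,
\end{equation*}
which is the desired identity.

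The argument is essentially a bookkeeping exercise; no step is a serious obstacle. The only subtlety worth flagging is the tacit assumption that each $g_i\in\Sl$ is genuinely an affine/linear map so that its Jacobian is constant in $\vx$; this is exactly the setting of \emph{linear flows} as described in Section~\ref{sec:background:flow}, so the assumption is consistent with the framework. One could also motivate the result directly from the energy-based interpretation by verifying that the proposed $Z(\param)$ is indeed the normalizing constant of $\exp(-E(\vx;\param))$, but the normalization-of-a-pdf route above is the cleanest.
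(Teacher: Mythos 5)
Your proof is correct and follows essentially the same route as the paper's: integrate the normalization identity $\int p(\vx;\param)\,d\vx = 1$, substitute the factorization of $p$, pull the input-independent product $\prod_{g_i\in\Sl}\abs{\det(\jacob_{g_i}(\param))}$ out of the integral, and divide. Your explicit remark that linear layers have constant Jacobians is a point the paper leaves implicit in its notation $\jacob_{g_i}(\param)$, but it does not change the argument.
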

\begin{proof}
\begin{align*}
  1 &= \int_{\vx\in \Rd} p(\vx;\param) d\vx\\
  &= \int_{\vx\in \Rd} \prior \of{g(\vx;\theta)}\prod_{g_i\in \Sn}\abs{\det (\jacob_{g_j}(\vx_{i-1};\param))} \prod_{g_i\in \Sl}\abs{\det (\jacob_{g_i}(\param))} d\vx\\
  &= \prod_{g_i\in \Sl}\abs{\det (\jacob_{g_i}(\param))} \int_{\vx\in \Rd} \prior \of{g(\vx;\theta)}\prod_{g_i\in \Sn}\abs{\det (\jacob_{g_i}(\vx_{i-1};\param))}  d\vx\\
\end{align*}
By multiplying $\of{\prod_{g_i\in \Sl}\abs{\det (\jacob_{g_i}(\param))}}^{-1}$ to both sides of the equation, we arrive at the conclusion:
\begin{align*}
  \bigg(\prod_{g_i\in \Sl}\abs{\det (\jacob_{g_i}(\param))} \bigg)^{-1}= \int_{\vx\in \Rd} \prior \of{g(\vx;\theta)}\prod_{g_i\in \Sn}\abs{\det (\jacob_{g_i}(\vx_{i-1};\param))} d\vx.
\end{align*}
\end{proof}

\begin{figure}[h!]
    \centering
    \footnotesize
    \includegraphics[width=\linewidth]{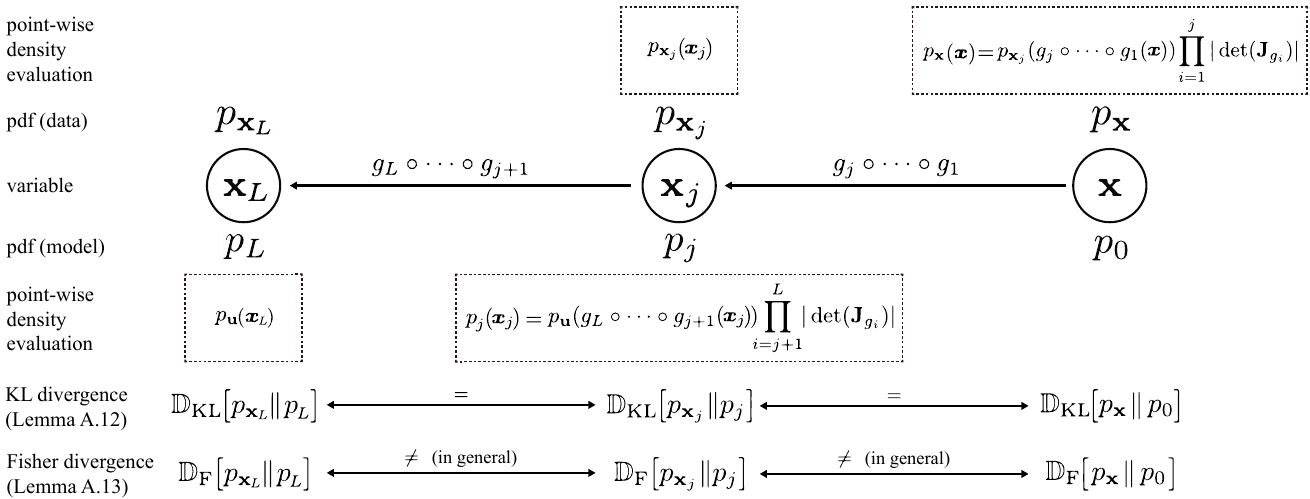}
    \vspace{-1em}
    \caption{An illustration of the relationship between the variables discussed in Proposition~\ref{prop:latent}, Lemma~\ref{lemma:kl}, and Lemma~\ref{lemma:fisher_derive}. $\rvx$ represents a random vector sampled from the data distribution $p_\rvx$. $\{g_i\}_{i=1}^L$ is a series of transformations. $\rvx_j\triangleq g_j\circ \cdots \circ g_1(\rvx)$, and $p_{\rvx_j}$ is its pdf. $p_j(\vx_j)=\prior(g_L\circ \cdots \circ g_{j+1}(\vx_j))\prod_{i=j+1}^L \abs{\det \of{\jacob_{g_i}}}$, where $p_\rvu$ is a prior distribution. The properties of KL divergence and Fisher divergence presented in the last two rows are derived in Lemmas~\ref{lemma:kl} and~\ref{lemma:fisher_derive}.}
    \label{fig:latent}
\end{figure}
\subsubsection{Theoretical Analyses of KL Divergence and Fisher Divergence}
\label{apx:derivations:latent}
In this section, we provide formal derivations for Proposition~\ref{prop:latent}, Lemma~\ref{lemma:kl}, and Lemma~\ref{lemma:fisher_derive}. To ensure a clear presentation, we provide a visualization of the relationship between the variables used in the subsequent derivations in Fig.~\ref{fig:latent}.
\begin{lemma}
\label{lemma:kl}
Let $p_{\rvx_j}$ be the pdf of the latent variable of $\rvx_j\triangleq  g_j\circ \cdots \circ g_1(\rvx)$ indexed by $j$. In addition, let $p_j(\cdot)$ be a pdf modeled as $\prior(g_L\circ \cdots \circ g_{j+1}(\cdot))\prod_{i=j+1}^L \abs{\det \of{\jacob_{g_i}}}$, where $j\in \{0,\cdots,L-1\}$. It follows that:
\begin{equation}
    \begin{aligned}
       \Dkl \ofmid{p_{\rvx_j}\|p_j}=\Dkl \ofmid{p_{\rvx}\|p_0}, \forall j\in \{1,\cdots,L-1\}.
    \end{aligned}
\end{equation}
\end{lemma}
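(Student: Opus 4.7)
The plan is to exploit the well-known invariance of the KL divergence under invertible transformations, which in this setting follows directly from the change-of-variable theorem combined with the specific product structure of $p_0$ and $p_j$.

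First, I would introduce the composite transformation $h_j \triangleq g_j \circ \cdots \circ g_1$, which is invertible by assumption, and use it as a bridge between the data-space quantities ($p_{\rvx}$, $p_0$) and the latent-space quantities ($p_{\rvx_j}$, $p_j$). The key observation is that both $p_\rvx$ and $p_0$ can be written as pullbacks of $p_{\rvx_j}$ and $p_j$ respectively through $h_j$. For the data density, the standard change-of-variable theorem gives $p_\rvx(\vx) = p_{\rvx_j}(h_j(\vx))\prod_{i=1}^{j}\abs{\det(\jacob_{g_i}(\vx;\param))}$. For the model density, simply unfolding the definition $p_0(\vx) = \prior(g_L\circ\cdots\circ g_1(\vx))\prod_{i=1}^L\abs{\det(\jacob_{g_i}(\vx;\param))}$ and regrouping the factors for $i\leq j$ and $i>j$ yields $p_0(\vx) = p_j(h_j(\vx))\prod_{i=1}^{j}\abs{\det(\jacob_{g_i}(\vx;\param))}$.

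With these two identities in hand, the ratio $p_\rvx(\vx)/p_0(\vx)$ simplifies to $p_{\rvx_j}(h_j(\vx))/p_j(h_j(\vx))$ because the Jacobian factors cancel. Substituting into the defining integral
\begin{equation*}
\Dkl\ofmid{p_{\rvx}\|p_0} = \int_{\vx\in\Rd} p_\rvx(\vx)\log\frac{p_\rvx(\vx)}{p_0(\vx)}\,d\vx
\end{equation*}
and then performing the change of variables $\vx_j = h_j(\vx)$, whose Jacobian factor $\prod_{i=1}^j\abs{\det(\jacob_{g_i}(\vx;\param))}$ converts $p_\rvx(\vx)\,d\vx$ into $p_{\rvx_j}(\vx_j)\,d\vx_j$, produces exactly $\int p_{\rvx_j}(\vx_j)\log(p_{\rvx_j}(\vx_j)/p_j(\vx_j))\,d\vx_j = \Dkl\ofmid{p_{\rvx_j}\|p_j}$. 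Iterating (or equivalently choosing $j$ arbitrarily) gives the full claim for every $j\in\{1,\ldots,L-1\}$.

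I do not foresee any real obstacle here; the proof is essentially bookkeeping with the change-of-variable theorem. The only point requiring minor care is verifying that the same Jacobian product $\prod_{i=1}^j|\det\jacob_{g_i}|$ that appears in the change of variables for the integration measure is also the factor relating $p_0$ to $p_j\circ h_j$, so that it cancels exactly inside the logarithm rather than leaving a residual term. This is guaranteed by the product form assumed for $p_0$, which factors cleanly at every layer index $j$.
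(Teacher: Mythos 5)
Your proposal is correct and follows essentially the same route as the paper's proof: both substitute the change-of-variable factorizations $p_\rvx(\vx)=p_{\rvx_j}(g_j\circ\cdots\circ g_1(\vx))\prod_{i=1}^j\abs{\det(\jacob_{g_i})}$ and the layerwise product form of $p_0$, cancel the shared Jacobian factors inside the logarithm, and then transfer the expectation from $p_\rvx$ to $p_{\rvx_j}$ via the identity $\E_{p_\rvx(\vx)}[f\circ g_j\circ\cdots\circ g_1(\vx)]=\E_{p_{\rvx_j}(\vx_j)}[f(\vx_j)]$, which is exactly your change of variables in the integral.
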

\begin{proof}
The equivalence $\Dkl \ofmid{p_{\rvx}\|p_{0}}= \Dkl \ofmid{p_{\rvx_j}\|p_{j}}$ holds for any $j\in \{1,\cdots,L-1\}$ since:
\begin{equation*}
\begin{aligned}
  &{\Dkl \ofmid{p_{\rvx}\|p_{0}}} \\
  &= {\E_{p_{\rvx}(\vx)} \ofmid{\log \of{\frac{p_{\rvx}(\vx)}{p_{0}(\vx)}}}} \\
  &= {\E_{p_{\rvx}(\vx)} \ofmid{\log \of{\frac{p_{\rvx_j}(g_j\circ \cdots \circ g_1(\vx))\prod_{i=1}^j \abs{\det \of{\jacob_{g_i}}} }{\prior(g_L\circ \cdots \circ g_{1}(\vx))\prod_{i=1}^L \abs{\det \of{\jacob_{g_i}}} }}}} \\
  &= {\E_{p_{\rvx}(\vx)} \ofmid{\log \of{\frac{p_{\rvx_j}(g_j\circ \cdots \circ g_1(\vx)) }{\prior(g_L\circ \cdots \circ g_{1}(\vx))\prod_{i=j+1}^L \abs{\det \of{\jacob_{g_i}}} }}}} \\
  &\stackrel{(i)}{=} {\E_{p_{\rvx_j}(\vx_j)} \ofmid{\log \of{\frac{p_{\rvx_j}(\vx_j)}{\prior(g_L\circ \cdots \circ g_{j+1}(\vx_j))\prod_{i=j+1}^L \abs{\det \of{\jacob_{g_i}}}}}}} \\
  &= {\Dkl \ofmid{p_{\rvx_j}\|p_{j}}}, \\
\end{aligned}
\end{equation*}
where $(i)$ is due to the property that $\E_{p_{\rvx}(\vx)}[f\circ g_j\circ \cdots \circ g_1(\vx)]=\E_{p_{\rvx_j}(\vx_j)}[f(\vx_j)]$ for a given function $f$. Therefore, $\Dkl \ofmid{p_{\rvx_j}\|p_{j}}= \Dkl \ofmid{p_{\rvx}\|p_{0}}$, $\forall j\in \{1,\cdots,L-1\}$.
\end{proof}

\begin{lemma} 
\label{lemma:fisher_derive}
Let $p_{\rvx_j}$ be the pdf of the latent variable of $\rvx_j\triangleq  g_j\circ \cdots \circ g_1(\rvx)$ indexed by $j$. In addition, let $p_j(\cdot)$ be a pdf modeled as $\prior(g_L\circ \cdots \circ g_{j+1}(\cdot))\prod_{i=j+1}^L \abs{\det \of{\jacob_{g_i}}}$, where $j\in \{0,\cdots,L-1\}$. It follows that:
\begin{equation}
    \begin{aligned}
       \Df \ofmid{p_{\rvx}\|p_0}=\E_{p_{\rvx_j}(\vx_j)} \ofmid{\half \norm{\of{\dd{\vx_j}{\log \of{\frac{p_{\rvx_j}(\vx_j)}{p_j(\vx_j) }}}} \prod_{i=1}^j \jacob_{g_i} }^2}, \forall j\in \{1,\cdots,L-1\}.
    \end{aligned}
\end{equation}
\end{lemma}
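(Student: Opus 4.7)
The plan is to reduce the Fisher divergence $\Df[p_\rvx\|p_0]$ to the stated expectation by applying the change-of-variables formula in two places: once to rewrite $p_0(\vx)$ and $p_\rvx(\vx)$ so that their ratio simplifies, and once on the outer expectation to switch integration variables from $\vx$ to $\vx_j$. Together with the chain rule, these steps generate the factor $\prod_{i=1}^j \jacob_{g_i}$ appearing inside the norm.

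First I will unfold the definition of $p_0$ past the first $j$ layers to obtain $p_0(\vx) = p_j(g_j\circ\cdots\circ g_1(\vx))\prod_{i=1}^j |\det(\jacob_{g_i})|$, and separately apply the change-of-variable formula for $\rvx_j = g_j\circ\cdots\circ g_1(\rvx)$ to write $p_\rvx(\vx) = p_{\rvx_j}(g_j\circ\cdots\circ g_1(\vx))\prod_{i=1}^j |\det(\jacob_{g_i})|$. The Jacobian determinant factors cancel in the ratio, so
\begin{equation*}
\log\frac{p_\rvx(\vx)}{p_0(\vx)} = \log\frac{p_{\rvx_j}(\vx_j)}{p_j(\vx_j)}\bigg|_{\vx_j = g_j\circ\cdots\circ g_1(\vx)}.
\end{equation*}

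Next I will differentiate both sides with respect to $\vx$ using the chain rule. Since $\vx_j$ depends on $\vx$ through the composition $g_j\circ\cdots\circ g_1$, its Jacobian w.r.t.\ $\vx$ is the product $\prod_{i=1}^j \jacob_{g_i}$ (with the convention used throughout the paper), giving
\begin{equation*}
\dd{\vx}{\log\frac{p_\rvx(\vx)}{p_0(\vx)}} = \bigg(\dd{\vx_j}{\log\frac{p_{\rvx_j}(\vx_j)}{p_j(\vx_j)}}\bigg)\prod_{i=1}^j \jacob_{g_i}.
\end{equation*}
Substituting this into the definition $\Df[p_\rvx\|p_0] = \E_{p_\rvx(\vx)}[\tfrac{1}{2}\|\dd{\vx}{\log(p_\rvx(\vx)/p_0(\vx))}\|^2]$ and applying the law of the unconscious statistician (which, since $\rvx_j = g_j\circ\cdots\circ g_1(\rvx)$, converts $\E_{p_\rvx(\vx)}[h(g_j\circ\cdots\circ g_1(\vx))]$ into $\E_{p_{\rvx_j}(\vx_j)}[h(\vx_j)]$) yields the claim.

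The only real subtlety is keeping track of matrix orders and row-vs.-column conventions for gradients so that the chain-rule product lands in the order $\prod_{i=1}^j \jacob_{g_i}$ as written; this is the same convention already used in Eq.~\eqref{eq:flow} and Lemma~\ref{lemma:kl}, so no new machinery is required. The result holds for every $j\in\{1,\ldots,L-1\}$ because the argument is uniform in $j$.
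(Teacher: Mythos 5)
Your proposal is correct and follows essentially the same route as the paper's own proof: expressing both $p_\rvx$ and $p_0$ via the change-of-variable formula through the first $j$ layers so the factors $\prod_{i=1}^j \abs{\det(\jacob_{g_i})}$ cancel in the ratio, applying the chain rule to produce the multiplier $\prod_{i=1}^j \jacob_{g_i}$, and converting the expectation with $\E_{p_\rvx(\vx)}[f\circ g_j\circ\cdots\circ g_1(\vx)]=\E_{p_{\rvx_j}(\vx_j)}[f(\vx_j)]$. No substantive difference from the paper's argument.
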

\begin{proof}
Based on the definition, the Fisher divergence between $p_{\rvx}$ and $p_{0}$ is written as:
\begin{equation*}
\begin{aligned}
  &\Df  \ofmid{p_{\rvx}\|p_{0}}\\
  &= \E_{p_{\rvx}(\vx)} \ofmid{\half \norm{\dd{\vx}{\log\of{\frac{p_{\rvx}(\vx)}{p_{0}(\vx)}}} }^2} \\
  &= \E_{p_{\rvx}(\vx)} \ofmid{\half \norm{\dd{\vx}{\log \of{\frac{p_{\rvx_j}(g_j\circ \cdots \circ g_1(\vx))\prod_{i=1}^j \abs{\det \of{\jacob_{g_i}}} }{\prior(g_L\circ \cdots \circ g_{1}(\vx))\prod_{i=1}^L \abs{\det \of{\jacob_{g_i}}} } }} }^2}\\
  &= \E_{p_{\rvx}(\vx)} \ofmid{\half \norm{\dd{\vx}{\log \of{\frac{p_{\rvx_j}(g_j\circ \cdots \circ g_1(\vx)) }{\prior(g_L\circ \cdots \circ g_{1}(\vx))\prod_{i=j+1}^L \abs{\det \of{\jacob_{g_i}}} }}} }^2} \\
  &= \E_{p_{\rvx}(\vx)} \ofmid{\half \norm{\of{\dd{g_j\circ \cdots \circ g_1(\vx)}{\log \of{ \frac{p_{\rvx_j}(g_j\circ \cdots \circ g_1(\vx)) }{\prior(g_L\circ \cdots \circ g_{1}(\vx))\prod_{i=j+1}^L \abs{\det \of{\jacob_{g_i}}} }} } } \frac{\partial g_j\circ \cdots \circ g_1(\vx)}{\partial \vx} }^2} \\
\end{aligned}
\end{equation*}
\begin{equation*}
\begin{aligned}
  &\stackrel{(i)}{=} \E_{p_{\rvx}(\vx)} \ofmid{\half \norm{\of{\dd{g_j\circ \cdots \circ g_1(\vx)}{\log \of{ \frac{p_{\rvx_j}(g_j\circ \cdots \circ g_1(\vx)) }{\prior(g_L\circ \cdots \circ g_{1}(\vx))\prod_{i=j+1}^L \abs{\det \of{\jacob_{g_i}}} }} } } \prod_{i=1}^j \jacob_{g_i}}^2} \\
  &\stackrel{(ii)}{=} \E_{p_{\rvx_j}(\vx_j)} \ofmid{\half \norm{\of{\dd{\vx_j}{\log \of{\frac{p_{\rvx_j}(\vx_j)}{\prior(g_L\circ \cdots \circ g_{j+1}(\vx_j))\prod_{i=j+1}^L \abs{\det \of{\jacob_{g_i}}} }}}} \prod_{i=1}^j \jacob_{g_i}}^2}, \\
  &= \E_{p_{\rvx_j}(\vx_j)} \ofmid{\half \norm{\of{\dd{\vx_j}{\log \of{\frac{p_{\rvx_j}(\vx_j)}{p_j(\vx_j) }}}} \prod_{i=1}^j \jacob_{g_i}}^2}, \\
\end{aligned}
\end{equation*}
where $(i)$ is due to the chain rule, and $(ii)$ is because $\E_{p_{\rvx}(\vx)}[f\circ g_j\circ \cdots \circ g_1(\vx)]=\E_{p_{\rvx_j}(\vx_j)}[f(\vx_j)]$ for a given function $f$.
\end{proof}

\begin{remark} 
\label{remark:fisher_latent}
Lemma~\ref{lemma:fisher_derive} implies that $\Df \ofmid{p_{\rvx_j}\|p_j}\neq \Df \ofmid{p_{\rvx}\|p_0}$ in general, as the latter contains an additional multiplier $\prod_{i=1}^j \jacob_{g_i}$ as shown below:
\begin{equation*}
\begin{aligned}
    \Df \ofmid{p_{\rvx}\|p_0}&=\E_{p_{\rvx_j}(\vx_j)} \ofmid{\half \norm{\of{\dd{\vx_j}{\log \of{\frac{p_{\rvx_j}(\vx_j)}{p_j(\vx_j) }}}} \prod_{i=1}^j \jacob_{g_i}}^2},\\
    \Df \ofmid{p_{\rvx_j}\|p_j}&=\E_{p_{\rvx_j}(\vx_j)} \ofmid{\half \norm{\of{\dd{\vx_j}{\log \of{\frac{p_{\rvx_j}(\vx_j)}{p_j(\vx_j) }}}}}^2}.
\end{aligned}
\end{equation*}
\end{remark}

\textbf{Proposition 4.1.}~\textit{Let $p_{\rvx_j}$ be the pdf of the latent variable of $\rvx_j\triangleq  g_j\circ \cdots \circ g_1(\rvx)$ indexed by $j$. In addition, let $p_j(\cdot)$ be a pdf modeled as $\prior(g_L\circ \cdots \circ g_{j+1}(\cdot))\prod_{i=j+1}^L \abs{\det \of{\jacob_{g_i}}}$, where $j\in \{0,\cdots,L-1\}$. It follows that:}
\begin{equation}
    \begin{aligned}
       \Df \ofmid{p_{\rvx_j}\|p_j}=0 &\Leftrightarrow \Df \ofmid{p_{\rvx}\|p_0}=0, \forall j\in \{1,\cdots,L-1\}.
    \end{aligned}
\end{equation}
\begin{proof}
Based on Remark~\ref{remark:fisher_latent}, the following holds:
\begin{equation*}
    \begin{aligned}
       &\Df\ofmid{p_{\rvx_j}\|p_j}=\E_{p_{\rvx_j}(\vx_j)} \ofmid{\half \norm{\dd{\vx_j}{\log \of{\frac{p_{\rvx_j}(\vx_j)}{p_j(\vx_j) }}}}^2}=0 \\
       \stackrel{(i)}{\Leftrightarrow} \,& \norm{\dd{\vx_j}{\log \of{\frac{p_{\rvx_j}(\vx_j)}{p_j(\vx_j) }}}}^2=0 \\
       \stackrel{(ii)}{\Leftrightarrow} \,& \norm{\dd{\vx_j}{\log \of{\frac{p_{\rvx_j}(\vx_j)}{p_j(\vx_j) }}} \prod_{i=1}^j \jacob_{g_i}}^2=0 \\
       \stackrel{(i)}{\Leftrightarrow} \,& \Df\ofmid{p_{\rvx}\|p_0}=\E_{p_{\rvx_j}(\vx_j)} \ofmid{\half \norm{\of{\dd{\vx_j}{\log \of{\frac{p_{\rvx_j}(\vx_j)}{p_j(\vx_j) }}}} \prod_{i=1}^j \jacob_{g_i}}^2}=0, \\
    \end{aligned}
\end{equation*}
where $(i)$ and $(ii)$ both result from the positiveness condition presented in Assumption~\ref{asp:positiveness}. Specifically, for $(i)$, $p_{\rvx_j}(\vx_j)=p_{\rvx}(g_1^{-1} \circ \cdots \circ g_j^{-1}(\vx_j))\prod_{i=1}^j \abs{\det \of{\jacob_{g^{-1}_i}}}>0$, since $p_{\rvx}>0$ and $\prod_{i=1}^j \abs{\det \of{\jacob_{g^{-1}_i}}}=\prod_{i=1}^j \abs{\det \of{\jacob_{g_i}}}^{-1}>0$. Meanwhile $(ii)$ holds since $\prod_{i=1}^j \abs{\det \of{\jacob_{g_i}}}>0$ and thus all of the singular values of $\prod_{i=1}^j \jacob_{g_i}$ are non-zero.
\end{proof}
\subsection{Experimental Setups}
\label{apx:configuration}
In this section, we elaborate on the experimental setups and provide the detailed configurations for the experiments presented in Section~\ref{sec:experiments} of the main manuscript. The code implementation for the experiments is provided in the following repository: \url{https://github.com/chen-hao-chao/ebflow}. Our code implementation is developed based on~\citep{Gresele2020RelativeGO, Pang2020EfficientLO, Keller2020SelfNF}.

\subsubsection{Experimental Setups for the Two-Dimensional Synthetic Datasets}
\label{apx:configuration:toy}
\textbf{Datasets.}~In Section~\ref{sec:experiments:2d}, we present the experimental results on three two-dimensional synthetic datasets: Sine, Swirl, and Checkerboard. The Sine dataset is generated by sampling data points from the set $\{(4w-2,\sin(12w-6))\,|\, w\in [0,1]\}$. The Swirl dataset is generated by sampling data points from the set $\{(-\pi\sqrt{w}\cos(\pi\sqrt{w}), \pi\sqrt{w}\sin(\pi\sqrt{w}))\,|\,w\in [0,1]\}$. The Checkerboard dataset is generated by sampling data points from the set $\{(4w-2,t-2s+\lfloor 4w-2\rfloor\, \mathrm{mod}\, 2)\,|\,w\in[0,1],\, t\in[0,1], \,s\in \{0,1\} \}$, where $\lfloor\cdot\rfloor$ is a floor function, and $\mathrm{mod}$ represents the modulo operation.

To establish $p_\rvx$ for all three datasets, we smooth a Dirac function using a Gaussian kernel. Specifically, we define the Dirac function as $\hat{p}(\hat{\vx})\triangleq\frac{1}{M}\sum_{i=1}^M \delta(\norm{\hat{\vx}-\hat{\vx}^{(i)}})$, where $\{\hat{\vx}^{(i)}\}_{i=1}^M$ are $M$ uniformly-sampled data points. The data distribution is defined as $p_\rvx(\vx)\triangleq\int \hat{p}(\hat{\vx}) \N(\vx|\hat{\vx}, \hat{\sigma}^2\mI) d\hat{\vx}=\frac{1}{M}\sum_{i=1}^M \N(\vx|\hat{\vx}^{(i)}, \hat{\sigma}^2\mI)$. The closed-form expressions for $p_\rvx(\vx)$ and $\dd{\vx}{\log p_\rvx(\vx)}$ can be obtained using the derivation in~\citep{Chao2022DenoisingLS}. In the experiments, $M$ is set as $50,000$, and $\hat{\sigma}$ is fixed at $0.375$ for all three datasets.

\begin{table*}[b]
\renewcommand{\arraystretch}{1}
\newcommand{\boldtoprule}{\midrule[0.9pt]}
\newcommand{\thline}{\midrule[0.3pt]}
\centering
\caption{The hyper-parameters used in the two-dimensional synthetic example in Section~\ref{sec:experiments:2d}.}
\label{tab:hyperparam_toy}
\begin{center}
    \small
    \resizebox{0.8\linewidth}{!}{
    \begin{tabular}{ccccccc}
        \boldtoprule
        Dataset &  & ML & SML & SSM & DSM & FDSSM \\
        \thline
        \multirow{3}{*}{Sine} & Optimizor & Adam & AdamW & Adam & Adam & Adam \\
        & Learning Rate & 5e-4 & 5e-4 & 1e-4 & 1e-4 & 1e-4 \\
        & Gradient Clip & 1.0 & None & 1.0 & 1.0 & 1.0 \\
        \thline
        \multirow{3}{*}{Swirl} & Optimizor & Adam & Adam & Adam & Adam & Adam \\
        & Learning Rate & 5e-3 & 1e-4 & 1e-4 & 1e-4 & 1e-4 \\
        & Gradient Clip & None & 10.0 & 10.0 & 10.0 & 2.5 \\
        \thline
        \multirow{3}{*}{Checkerboard} & Optimizor & AdamW & AdamW & AdamW & AdamW & Adam \\
        & Learning Rate & 1e-4 & 1e-4 & 1e-4 & 1e-4 & 1e-4 \\
        & Gradient Clip & 10.0 & 10.0 & 10.0 & 10.0 & 10.0 \\
        \boldtoprule
    \end{tabular}}
\end{center}
\vspace{-2em}
\end{table*}

\begin{table}[t]
\renewcommand{\arraystretch}{1.1}
\newcommand{\boldtoprule}{\midrule[0.9pt]}
\newcommand{\thline}{\midrule[0.3pt]}
\centering
\caption{The components of $g(\cdot\,;\param)$ used in this paper. In this table, $\vz$ and $\vy$ are the output and the input of a layer, respectively. $\beta$ and $\gamma$ represent the mean and variance of an actnorm layer. $\vw$ is a convolutional kernel, and $\vw\star\vy\triangleq \hat{\mW}\vy$, where $\star$ is a convolutional operator, and $\hat{\mW}$ is a $D\times D$ matrix. $\mW$ and $\vb$ represent the weight and bias in a fully-connected layer. $\alpha$ is a hyper-parameter for adjusting the smoothness of smooth leaky ReLU. In the affine coupling layer, $\vz$ and $\vy$ are split into two parts $\{\vz_a,\vz_b\}$ and $\{\vy_a,\vy_b\}$, respectively. $s(\cdot;\param)$ and $t(\cdot;\param)$ are the scaling and transition networks parameterized with $\param$. $\sigmoid{y}=1/(1+\exp \of{-y})$ represents the sigmoid function. $\dim \of{\cdot}$ represents the dimension of the input vector. $\vy_{[i]}$ represents the $i$-th element of vector $\vy$.}
\vspace{-0.5em}
\label{tab:layers}
\begin{center}
    \resizebox{\linewidth}{!}{
    \begin{tabular}{c|cc|c}
        \boldtoprule
        Layer & Function & Log Jacobian Determinant & Set \\
        \thline
        actnorm~\citep{Kingma2018GlowGF} & $\vz=(\vy-\beta)/\gamma$ & $\sum_{i=1}^D{\log\abs{1/\gamma_{[i]}}}$ & $\Sl$ \\
        convolutional     & $\vz=\vw \star\vy+\vb$ & $\log\abs{\det\of{\hat{\mW}}}$ & $\Sl$ \\
        fully-connected & $\vz=\mW\vy+\vb$ & $\log\abs{\det\of{\mW}}$ & $\Sl$ \\
        \thline
        smooth leaky ReLU~\citep{Gresele2020RelativeGO} & $\vz=\alpha\vy+(1-\alpha)\log (1+\exp\of{\vy})$ & $\sum_{i=1}^D{\log\abs{\alpha+(1-\alpha)\sigmoid{\vy_{[i]}}}}$ & $\Sn$ \\
        affine coupling~\citep{Kingma2018GlowGF} & $\vz_a=s(\vy_b;\param)\vy_a+t(\vy_b;\param)$, $\vz_b=\vy_b$  & $\sum_{i=1}^{\dim\of{\vy_b}}{\log\abs{s(\vy_b;\param)_{[i]}}}$ & $\Sn$ \\
        \boldtoprule
    \end{tabular}}
\end{center}
\vspace{-1em}
\end{table}
\textbf{Implementation Details.}~The model architecture of $g(\cdot\,;\param)$ consists of ten Glow blocks~\citep{Kingma2018GlowGF}. Each block comprises an actnorm~\citep{Kingma2018GlowGF} layer, a fully-connected layer, and an affine coupling layer. Table~\ref{tab:layers} provides the formal definitions of these operations. $\prior(\cdot)$ is implemented as an isotropic Gaussian with zero mean and unit variance. To determine the best hyperparameters, we perform a grid search over the following optimizers, learning rates, and gradient clipping values based on the evaluation results in terms of the KL divergence. The optimizers include Adam~\citep{Kingma2014AdamAM}, AdamW~\citep{Loshchilov2017DecoupledWD}, and RMSProp. The learning rate and gradient clipping values are selected from (5e-3, 1e-3, 5e-4, 1e-4) and (None, 2.5, 10.0), respectively. Table~\ref{tab:hyperparam_toy} summarizes the selected hyperparameters. The optimization processes of Sine and Swirl datasets require 50,000 training iterations for convergence, while that of the Checkerboard dataset requires 100,000 iterations. The batch size is fixed at 5,000 for all setups.

\subsubsection{Experimental Setups for the Real-world Datasets}
\label{apx:configuration:datasets}
\textbf{Datasets.}~The experiments presented in Section~\ref{sec:experiments:high_dim} are performed on the MNIST~\citep{deng2012mnist} and CIFAR-10~\citep{Krizhevsky2009LearningML} datasets. The training and test sets of MNIST and CIFAR-10 contain 50,000 and 10,000 images, respectively. The data are smoothed using the uniform dequantization method presented in~\citep{Dinh2014NICENI}. The observable parts (i.e., $\vx_O$) of the images in Fig.~\ref{fig:imputation} are produced using the pre-trained model in~\citep{ning2023input}.

\textbf{Implementation Details.}~In Sections~\ref{sec:experiments:high_dim} and \ref{sec:experiments:application}, we adopt three types of model architectures: FC-based~\citep{Gresele2020RelativeGO}, CNN-based, and Glow~\citep{Kingma2018GlowGF} models. The FC-based model contains two fully-connected layers and a smoothed leaky ReLU non-linearity~\citep{Gresele2020RelativeGO} in between, which is identical to~\citep{Gresele2020RelativeGO}. The CNN-based model consists of three convolutional blocks and two squeezing operations~\citep{Dinh2016DensityEU} between every convolutional block. Each convolutional block contains two convolutional layers and a smoothed leaky ReLU in between. The Glow model adopted in Section~\ref{sec:experiments:application} is composed of 16 Glow blocks. Each of the Glow block consists of an actnorm~\citep{Kingma2018GlowGF} layer, a convolutional layer, and an affine coupling layer. The squeezing operation is inserted between every eight blocks. The operations used in these models are summarized in Table~\ref{tab:layers}. The smoothness factor $\alpha$ of Smooth Leaky ReLU is set to 0.3 and 0.6 for models trained on MNIST and CIFAR-10, respectively. The scaling and transition functions $s(\cdot\,;\param)$ and $t(\cdot\,;\param)$ of the affine coupling layers are convolutional blocks with ReLU activation functions. The prior distribution $\prior(\cdot)$ is implemented as an isotropic Gaussian with zero mean and unit variance. The FC-based and CNN-based models are trained with RMSProp using a learning rate initialized at 1e-4 and a batch size of 100. The Glow model is trained with an Adam optimizer using a learning rate initialized at 1e-4 and a batch size of 100. The gradient clipping value is set to 500 during the training for the Glow model. The learning rate scheduler \texttt{MultiStepLR} in \texttt{PyTorch} is used for gradually decreasing the learning rates. The hyper-parameters $\{\sigma,\rveps\}$ used in DSM and FDSSM are selected based on a grid search over $\{0.05,0.1,0.5,1.0\}$. The selected $\{\sigma,\rveps\}$ are $\{1.0,1.0\}$ and $\{0.1,0.1\}$ for the MNIST and CIFAR-10 datasets, respectively. The parameter $m$ in EMA is set to 0.999. The algorithms are implemented using \texttt{PyTorch}~\citep{Paszke2019PyTorchAI}. The gradients w.r.t. $\vx$ and $\param$ are both calculated using automatic differential tools~\citep{Griewank2000EvaluatingD} provided by \texttt{PyTorch}~\citep{Paszke2019PyTorchAI}. The runtime is evaluated on Tesla V100 NVIDIA GPUs. In the experiments performed on CIFAR-10 and CelebA using score-matching methods, the energy function (i.e., $\E_{p_\rvx(\vx)} \ofmid{E(\vx;\param)}$) is added as a regularization loss with a balancing factor fixed at $0.001$ during the optimization processes. The results in Fig.~\ref{fig:runtime_efficiency}~(b) are smoothed with the exponential moving average function used in \texttt{Tensorboard}~\citep{tensorflow2015-whitepaper}, i.e., $w \times d_{i-1} + (1-w) \times d_{i}$, where $w$ is set to 0.45 and $d_i$ represents the evaluation result at the $i$-th iteration.

\textbf{Results of the Related Works.}~The results of the relative gradient~\citep{Gresele2020RelativeGO}, SSM~\citep{song2019sliced}, and FDSSM~\citep{Pang2020EfficientLO} methods are directly obtained from their original paper. On the other hand, the results of the DSM method is obtained from~\citep{Pang2020EfficientLO}. Please note that the reported results of~\citep{song2019sliced} and \citep{Pang2020EfficientLO} differ from each other given that they both adopt the NICE~\citep{Dinh2014NICENI} model. Specifically, the SSM method achieves NLL$=3,355$ and NLL$=6,234$ in~\citep{song2019sliced} and~\citep{Pang2020EfficientLO}, respectively. Moreover, the DSM method achieves NLL$=4,363$ and NLL$=3,398$ in~\citep{song2019sliced} and~\citep{Pang2020EfficientLO}, respectively. In Table~\ref{tab:benchmark}, we report the results with lower NLL.
\begin{table}[t]
\renewcommand{\arraystretch}{1.5}
\newcommand{\boldtoprule}{\midrule[1.1pt]}
\newcommand{\thline}{\midrule[0.3pt]}
\centering
\caption{The simulation results of Eq.~(\ref{eq:is_det}). The error rate is measured by $|d_\mathrm{true}-d_\mathrm{est}|/|d_\mathrm{true}|$, where $d_\mathrm{true}$ and $d_\mathrm{est}$ represent the true and estimated Jacobian determinants, respectively.}
\vspace{0.5em}
\label{tab:is}
\begin{center}
\footnotesize
    \begin{tabular}{c|ccc}
        \boldtoprule
          & $D=$50 & $D=$100 & $D=$200  \\
        \thline
         Error Rate ($M=$50) & 0.004211 & 0.099940 & 0.355314 \\
         Error Rate ($M=$100) & 0.003503 & 0.034608 & 0.076239 \\
         Error Rate ($M=$200) & \textbf{0.002332} & \textbf{0.015411} & \textbf{0.011175} \\
        \boldtoprule
    \end{tabular}
\end{center}
\end{table}
\subsection{Estimating the Jacobian Determinants using Importance Sampling}
\label{apx:additional_exp:is}
Importance sampling is a technique used to estimate integrals, which can be employed to approximate the normalizing constant $Z(\param)$ in an energy-based model. In this method, a pdf $q$ with a simple closed form that can be easily sampled from is selected. The normalizing constant can then be expressed as the following formula:
\begin{equation}
    \begin{aligned}
       Z(\param) = \int_{\vx\in \Rd} \exp \of{-E(\vx;\param)} d\vx &= \int_{\vx\in \Rd} q(\vx) \frac{\exp \of{-E(\vx;\param)}}{q(\vx)} d\vx \\
       &=\E_{q(\vx)}\ofmid{\frac{\exp \of{-E(\vx;\param)}}{q(\vx)}}\approx\frac{1}{M}\sum^M_{j=1} \frac{\exp \of{-E(\hat{\vx}^{(j)};\param)}}{q(\hat{\vx}^{(j)})},
    \end{aligned}
    \label{eq:is}
\end{equation}where $\{\hat{\vx}^{(j)}\}_{j=1}^M$ represents $M$ i.i.d. samples drawn from $q$. According to Lemma~\ref{lemma:intuition}, the Jacobian determinants of the layers in $\Sl$ can be approximated using Eq.~(\ref{eq:is}) as follows:
\begin{equation}
    \begin{aligned}
       \of{\prod_{g_i\in \Sl}\abs{\det (\jacob_{g_i}(\param))}}^{-1} \approx\frac{1}{M}\sum^M_{j=1} \frac{\prior \of{g(\hat{\vx}^{(j)};\theta)}\prod_{g_i\in \Sn}\abs{\det (\jacob_{g_i}(\hat{\vx}_{i-1}^{(j)};\param))}}{q(\hat{\vx}^{(j)})}.
    \end{aligned}
    \label{eq:is_det}
\end{equation}To validate this idea, we provide a simple simulation with $p_\rvu=\N(\mO,\mI)$, $q=\N(\mO,\mI)$, $g(\vx;\mW)=\mW\vx$, $M=\{50,100,200\}$, and $D=\{50,100,200\}$ in Table~\ref{tab:is}. The results show that larger values of $M$ lead to more accurate estimation of the Jacobian determinants. Typically, the choice of $q$ is crucial to the accuracy of importance sampling. To obtain an accurate approximation, one can adopt the technique of annealed importance sampling (AIS)~\citep{Neal1998AnnealedIS} or Reverse AIS Estimator (RAISE)~\citep{Burda2014AccurateAC}, which are commonly-adopted algorithms for effectively estimating $Z(\param)$.

Eq.~(\ref{eq:is_det}) can be interpreted as a generalization of the stochastic estimator presented in~\citep{sohldickstein2020equalities}, where the distributions $p_\rvu$ and $q$ are modeled as isotropic Gaussian distributions, and $g$ is restricted as a linear transformation. For the further analysis of this concept, particularly in the context of determinant estimation for matrices, we refer readers to Section I of \citep{sohldickstein2020equalities}, where a more sophisticated approximation approach and the corresponding experimental findings are provided.
\begin{table}[t]
\vspace{-1em}
\renewcommand{\arraystretch}{1.35}
\newcommand{\boldtoprule}{\midrule[2.5pt]}
\newcommand{\thline}{\midrule[0.3pt]}
\centering
\caption{An overall comparison between EBFlow, the baseline method, the Relative Gradient method~\citep{Gresele2020RelativeGO}, and the methods that utilize specially designed linear layers~\citep{Song2019MintNetBI, Hoogeboom2019EmergingCF, Ma2019MaCowMC, Lu2020WoodburyTF, Meng2022ButterflyFlowBI, Tomczak2016ImprovingVA}. The notations \cmark / \xmark~in row `Unbiased' represent whether the models are optimized according to an unbiased target. On the other hand, the notations \cmark / \xmark~in row `Unconstrained' represent whether the models can be constructed with arbitrary linear transformations. $^{(\dagger)}$ The approximation errors $o(\rveps)$ of FDSSM is controlled by its hyper-parameter $\rveps$. $^{(\ddagger)}$ The error $o(\mW)$ of the Relative Gradient method is determined by the values of a model's weights.}
\label{tab:overall_comparison}
\begin{center}
\huge
    \resizebox{\linewidth}{!}{
    \begin{tabular}{c|cccc|ccc}
        \boldtoprule
        & \multicolumn{4}{c|}{KL-Divergence-Based} & \multicolumn{3}{c}{Fisher-Divergence-Based} \\
        & Baseline (ML) & EBFlow (SML) & Relative Grad. & Special Linear & EBFlow (SSM) & EBFlow (DSM) & EBFlow (FDSSM)\\
        \thline
        Complexity & $\Oof{D^3L}$ & $\Oof{D^3L}$ & $\Oof{D^2L}$ & $\Oof{D^2L}$ & $\Oof{D^2L}$ & $\Oof{D^2L}$ & $\Oof{D^2L}$\\
        \thline
        Unbiased & \cmark & \cmark & \xmark$^{(\ddagger)}$ & \cmark & \cmark & \xmark & \xmark$^{(\dagger)}$\\
        Unconstrained & \cmark & \cmark & \cmark & \xmark & \cmark & \cmark & \cmark\\
        \boldtoprule
    \end{tabular}}
\end{center}
\vspace{-0.5em}
\end{table}

\begin{figure}[b!]
    \centering
    \footnotesize
    \includegraphics[width=0.825\linewidth]{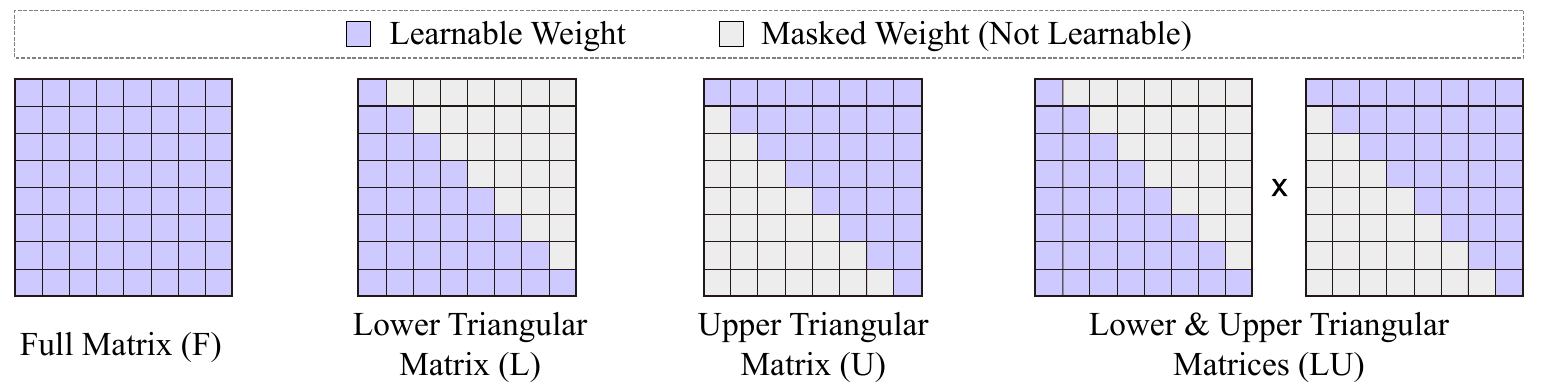}
    \caption{An illustration of the weight matrices in the F, L, U, and LU layers described in Section~\ref{apx:impact}.}
    \label{fig:linear_examples}
\end{figure}
\begin{figure}[b!]
    \centering
    \footnotesize
    \includegraphics[width=\linewidth]{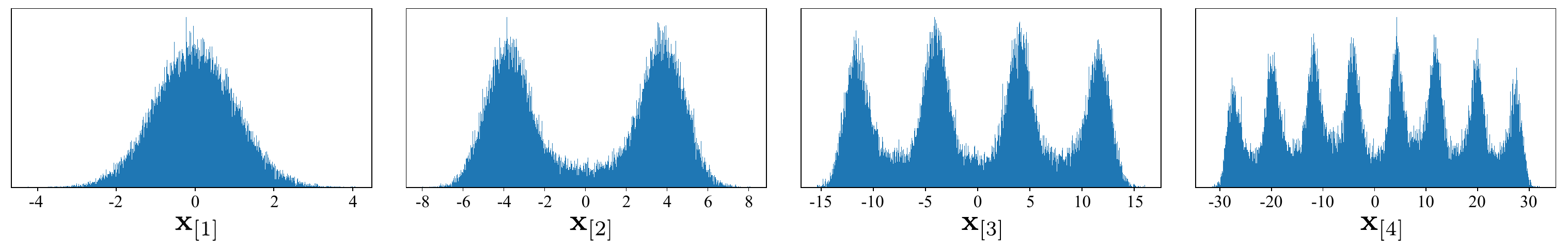}
    \vspace{-1.5em}
    \caption{Visualized marginal distributions of $p_{\rvx_{[i]}}$ for $i=1,2,3$, and $4$.}
    \label{fig:mm_distribution}
\end{figure}

\subsection{A Comparison among the Methods Discussed in this Paper}
\label{apx:compare}
In Sections~\ref{sec:background}, \ref{sec:related_works}, and \ref{sec:methodology}, we discuss various methods for efficiently training flow-based models. To provide a comprehensive comparison of these methods, we summarize their complexity and characteristics in Table~\ref{tab:overall_comparison}.
\subsection{The Impacts of the Constraint of Linear Transformations on the Performance of a Flow-based Model}
\label{apx:impact}
In this section, we examine the impact of the constraints of linear transformations on the performance of a flow-based model. A key distinction between constrained and unconstrained linear layers lies in how they model the correlation between each element in a data vector. Constrained linear transformations, such as those used in the previous works~\citep{Song2019MintNetBI, Hoogeboom2019EmergingCF, Ma2019MaCowMC, Lu2020WoodburyTF, Meng2022ButterflyFlowBI, Tomczak2016ImprovingVA}, impose predetermined correlations that are not learnable during the optimization process. For instance, masked linear layers~\citep{Song2019MintNetBI, Hoogeboom2019EmergingCF, Ma2019MaCowMC} are constructed by masking either the upper or lower triangular weight matrix in a linear layer. In contrast, unconstrained linear layers have weight matrices that are fully learnable, making them more flexible than their constrained counterparts.

To demonstrate the influences of the constraint on the expressiveness of a model, we provide a performance comparison between flow-based models constructed using different types of linear layers. Specifically, we compare the performance of the models constructed using linear layers with full matrices, lower triangular matrices, upper triangular matrices, and matrices that are the multiplication of both lower and upper triangular matrices. These four types of linear layers are hereafter denoted as F, L, U, and LU, respectively, and the differences between them are depicted in Fig.~\ref{fig:linear_examples}. Furthermore, to highlight the performance discrepancy between these models, we construct the target distribution $p_{\rvx}$ based on an autoregressive relationship of data vector $\rvx$. Let $\rvx_{[i]}$ denote the $i$-th element of $\rvx$, and $p_{\rvx_{[i]}}$ represent its associated pdf. $\rvx_{[i]}$ is constructed based on the following equation:
\begin{equation}
\label{eq:auto}
\rvx_{[i]}=
\begin{cases}
  \rvu_{[0]} & \text{if $i=1$,}\\
  \mathrm{tanh}(\rvu_{[i]}\times s)\times (\rvx_{[i-1]}+d\times 2^{i}), & \text{if $i\in\{2,\dots,D\}$,}\\
\end{cases}
\end{equation}
where $\rvu$ is sampled from an isotropic Gaussian, and $s$ and $d$ are coefficients controlling the shape and distance between each mode, respectively. In Eq.~(\ref{eq:auto}), the function $\mathrm{tanh}(\cdot)$ can be intuitively viewed as a smoothed variant of the function $2H(\cdot)-1$, where $H(\cdot)$ represents the Heaviside step function. In this context, the values of $(\rvx_{[i-1]}+d\times 2^{i})$ are multiplied by a value close to either $-1$ or $1$, effectively transforming a positive number to a negative one. Fig.~\ref{fig:mm_distribution} depicts a number of examples of $p_{\rvx_{[i]}}$ constructed using this method. By employing this approach to design $p_{\rvx}$, where capturing $p_{\rvx_{[i]}}$ is presumed to be more challenging than modeling $p_{\rvx_{[j]}}$ for any $j<i$, we can inspect how the applied constraints impact performance. Inappropriately masking the linear layers, like the U-type layer, is anticipated to result in degraded performance, similar to the \textit{anti-casual} effect explained in~\citep{pml2Book}.

\begin{wrapfigure}{r}{0.5\linewidth}
    \centering
    \vspace{-1.3em}
    \footnotesize
    \includegraphics[width=\linewidth]{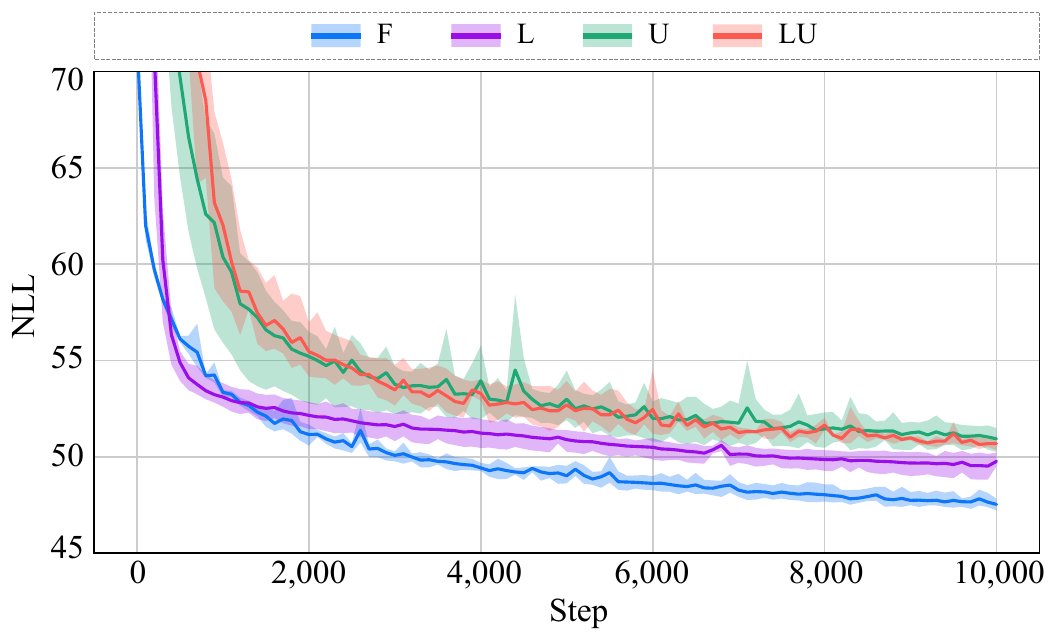}
    \vspace{-2em}
    \caption{The evaluation curves in terms of NLL of the flow-based models constructed with the F-type, L-type, U-type, and LU-type layers. The curves and shaded area depict the mean and 95\% confidence interval of three independent runs.}
    \vspace{-1.7em}
    \label{fig:modeling_ability}
\end{wrapfigure}

In this experiment, we constructed flow-based models using the smoothed leakyReLU activation and different types of linear layers (i.e., F, L, U, and LU) with a dimensionality of $D=10$. The models are optimized according to Eq.~(\ref{eq:ml}). The performance of these models is evaluated in terms of NLL, and its trends are depicted in Fig.~\ref{fig:modeling_ability}. It is observed that the flow-based model built with the F-type layers achieved the lowest NLL, indicating the advantage of using unconstrained weight matrices in linear layers. In addition, there is a noticeable performance discrepancy between models with the L-type and U-type layers, indicating that imposing inappropriate constraints on linear layers may negatively affect the modeling abilities of flow-based models. Furthermore, even when both L-type and U-type layers were adopted, as shown in the red curve in Fig.~\ref{fig:modeling_ability}, the performance remains inferior to those using the F-type layers. This experimental evidence suggests that linear layers constructed based on matrix decomposition (e.g., \citep{Kingma2018GlowGF, Hoogeboom2019EmergingCF}) may not possess the same expressiveness as unconstrained linear layers.
\begin{figure}[t!]
    \centering
    \footnotesize
    \includegraphics[width=0.75\linewidth]{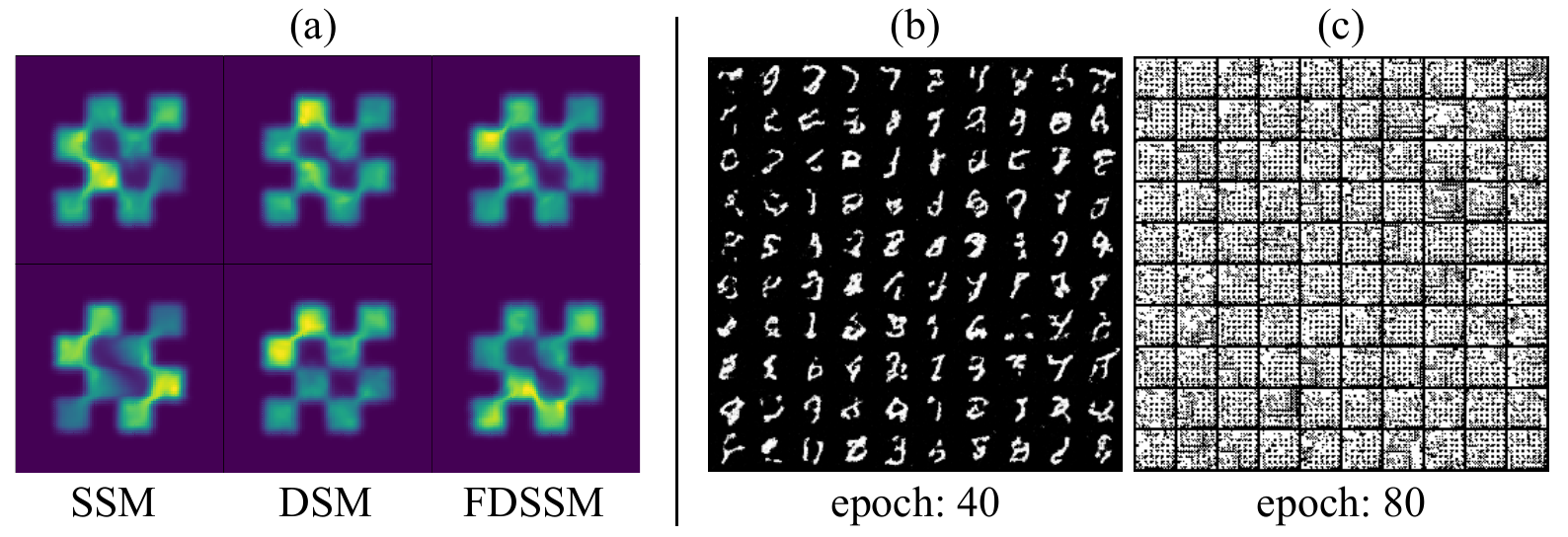}
    \caption{(a) Visualized examples of EBFlow trained with SSM, DSM, and FDSSM on the Checkerboard dataset. (b) The samples generated by the Glow model at the 40-th training epoch. (c) The samples generated by the Glow model at the 80-th training epoch.}
    \label{fig:fail_cases}
\end{figure}
\subsection{Limitations and Discussions}
\label{apx:limitations}
We noticed that score-matching methods sometimes exhibit difficulty in differentiating the weights between individual modes within a multi-modal distribution. This deficiency is illustrated in Fig.~\ref{fig:fail_cases} (a), where EBFlow fails to accurately capture the density of the Checkerboard dataset. This phenomenon bears resemblance to the \textit{blindness} problem discussed in~\citep{zhang2022towards}. While the solution proposed in~\citep{zhang2022towards} has the potential to address this issue, their approach is not directly applicable to the flow-based architectures employed in this paper. 

In addition, we observed that the sampling quality of EBFlow occasionally experiences a significant reduction during the training iterations. This phenomenon is illustrated in Fig.~\ref{fig:fail_cases} (b) and (c), where the Glow model trained using our approach demonstrates a decline in performance with extended training periods. The underlying cause of this phenomenon remains unclear, and we consider it a potential avenue for future investigation.

\end{document}